\newcommand{\para}[1]{\left( {#1} \right)}
\newcommand{\myparagraph}[1]{\noindent\textbf{#1}}
\newcommand{\vect}[1]{\ensuremath{\boldsymbol{\mathbf{#1}}}}
\newcommand{\rdmvect}[1]{\ensuremath{\bm{#1}}}
\newcommand{\mat}[1]{\ensuremath{\boldsymbol{\mathbf{#1}}}}
\newcommand{\rdmmat}[1]{\ensuremath{\bm{#1}}}
\newcommand{\tens}[1]{ {\ensuremath{\underline{\boldsymbol{\mathbf{#1}}}}}}
\newcommand{\rdmtens}[1]{ {\ensuremath{\underline{\bm{#1}}}}}
\newcommand{\norm}[1]{\ensuremath{ \|#1 \|}}
\newcommand{\R}{\mathbb{R}}
\newcommand{\setim}{\mathcal{I}_m}
\newcommand{\setit}{\mathcal{I}_t}
\newcommand{\ind}{\mathds{1}}
\newcommand{\rdm}[1]{  \mathrm{#1}  }
\pgfplotsset{compat=1.18}
\setlist[enumerate]{leftmargin=1.5em}
\setlist[itemize]{leftmargin=1.5em}
\numberwithin{equation}{section}
\declaretheorem[thmbox=M,name=Theorem,numberwithin=section]{mytheo}
\declaretheorem[thmbox=M,name=Proposition,numberwithin=section]{myprop}
\declaretheorem[thmbox=S,name=Lemma,numberwithin=section]{mylemma}
\theoremstyle{definition}
\newtheorem{mydef}{Definition}[section]
\renewenvironment{proof}[1][\proofname]{\par
  \pushQED{\qed}%
  \normalfont \topsep6\p@\@plus6\p@\relax
  \trivlist
  \item[\hskip\labelsep\textbf{#1.}]\ignorespaces
}{%
  \popQED\endtrivlist\@endpefalse
}
\begin{document}

\title{Computational Thresholds in Multi-Modal Learning \\ via the Spiked Matrix-Tensor Model}

\author[1]{Hugo Tabanelli}
\author[1]{Pierre Mergny}
\author[2]{Lenka Zdeborov\'a}
\author[1]{Florent Krzakala}

\affil[1]{Information Learning and Physics Laboratory, \'Ecole Polytechnique F\'ed\'erale de Lausanne (EPFL)}
\affil[2]{Statistical Physics of Computation Laboratory, \'Ecole Polytechnique F\'ed\'erale de Lausanne (EPFL)}

\maketitle

\begin{abstract}
We study the recovery of multiple high-dimensional signals from two noisy, correlated modalities: a spiked matrix and a spiked tensor sharing a common low-rank structure. This setting generalizes classical spiked matrix and tensor models, unveiling intricate interactions between inference channels and surprising algorithmic behaviors. Notably, while the spiked tensor model is typically intractable at low signal-to-noise ratios, its correlation with the matrix enables efficient recovery via Bayesian Approximate Message Passing, inducing staircase-like phase transitions reminiscent of neural network phenomena. In contrast, empirical risk minimization for joint learning fails: the tensor component obstructs effective matrix recovery, and joint optimization significantly degrades performance, highlighting the limitations of naive multi-modal learning. We show that a simple Sequential Curriculum Learning strategy—first recovering the matrix, then leveraging it to guide tensor recovery—resolves this bottleneck and achieves optimal weak recovery thresholds. This strategy, implementable with spectral methods, emphasizes the critical role of structural correlation and learning order in multi-modal high-dimensional inference.
\end{abstract}

\section{Introduction}

Understanding which learning problems are inherently hard and which are easy is a fundamental question in high-dimensional inference and machine learning theory. While convex optimization is well understood, recent theoretical efforts have focused on structured non-convex functions to explain the practical efficiency of training neural networks \cite{lecun2015deep}. This question is particularly interesting in the context of multi-modal learning \citep{ngiam2011multimodal,baltruvsaitis2018multimodal}, where different but correlated data sources/modalities provide complementary views of the same latent structure. A central issue in this setting is how integrating information from multiple modalities improves statistical power and aligns shared structures, compared to analyzing each modality independently. This raises important questions about the design of the optimization process — for example, in what order to incorporate modalities, whether to use a curriculum \citep{bengio2009curriculum}, and how such choices affect convergence and generalization.

Here, we consider this question in the framework of a popular line of theoretical research on the learnability of spiked matrix and tensor models (see, e.g., \citep{arous2005phase,donoho2018optimal,richard2014statistical,lesieur2017statistical}). Retrieving information hidden in a spiked matrix can be done efficiently—and optimally—using a simple spectral method. In contrast, retrieving information optimally from a spiked tensor is significantly harder and gives rise to the well-studied statistical-to-computational gap at the root of recent efforts to understand computational hardness \citep{hopkins2015tensor,perry_statistical_2020,wein2019kikuchi,arous2020algorithmic}.

\vspace{2mm}
\myparagraph{The matrix-tensor model ---} Motivated by these considerations, we introduce and study a mixed spiked matrix-tensor model where correlated data modalities provide complementary information on joint latent variables. The goal is to recover the “hidden” latent variables $\vect{u}^\star \in \R^{N_1}$, $\vect{v}^\star \in \R^{N_2}$, $\vect{x}^\star \in \R^{N_3}$, and $\vect{y}^\star \in \R^{N_4}$ given the knowledge of a spiked  matrix $\rdmmat{Y}_m \in \R^{N_1 \times N_2}$ and tensor $\rdmtens{Y}_t \in \R^{N_1 \times N_3 \times N_4}$ defined component-wise as:
\begin{align}
     \rdmmat{Y}_{m} = \sqrt{\Delta_m}\, \rdmmat{Z} + \frac{1}{\sqrt{N_1}} \, \vect{u}^\star \para{\vect{v}^\star}^T & \hspace{0.7cm}\textrm{or equivalently}& Y_{m,\, i j} = \sqrt{\Delta_m}\, Z_{i j} + \frac{1}{\sqrt{N_1}} \, u^\star_{i}v^\star_{j}  \label{eq:uts_setting_m} \,  \, , \\
    \rdmtens{Y}_{t} = \sqrt{\Delta_t} \, \rdmtens{Z} + \frac{1}{N_1}\vect{u}^\star\otimes \vect{x}^\star \otimes \vect{y}^\star &\hspace{0.7cm} \textrm{or equivalently}&   Y_{t,\, i j k} = \sqrt{\Delta_t} \, Z_{i j k} + \frac{1}{N_1}u^\star_{i} x^\star_{j} y^\star_{k} \label{eq:uts_setting_t} \, ,
\end{align}
where $\rdmmat{Z} \in \R^{N_1 \times N_2}$ and $\rdmtens{Z} \in \R^{N_1 \times N_3 \times N_4}$ are Gaussian noise matrix and tensor with independent and identically distributed (i.i.d.) entries, scaled by noise parameters $\Delta_m$ and $\Delta_t$ for the matrix and tensor channels, respectively. Alternatively, one can note the signal-to-noise ratios are defined as $\lambda_2 = \frac{1}{\Delta_m}$ and $\lambda_3 = \frac{1}{\Delta_t}$.  Taken individually, these problems reduce to a spike Wishart model \cite{donoho2018optimal} for the matrix, and a non-symmetric spiked tensor model \citep{barbier2017layered,kadmon2018statistical}. The crucial ingredient in the matrix-tensor model is that one of the latent vectors in $\rdmmat{Y}_m$ (the vector ${\bf u}^\star$) is also appearing in the tensor $\rdmtens{Y}_t$.

We can interpret this as a basic model for multi-modal learning in two ways. In one interpretation, $N_1$ and $N_4$ represent the data dimensionalities of two modalities, while $N_2$ and $N_3$ correspond to the number of samples from each modality—for instance, a medical condition represented by a latent vector $\vect{u}^\star$ that is observed through two different imaging techniques. In an alternative interpretation, $N_1$ denotes the number of samples, and $N_2$, $N_3$, and $N_4$ represent the dimensionalities of various views of each sample—such as images and their corresponding captions, with $u_i^\star$ indicating the underlying topic of sample $i$. Since we are not focused on any particular application, we will treat the matrix-tensor model abstractly, encompassing both perspectives.

To provide rigorous analytical insight into the learnability of the hidden vectors and on the interplay of learning both from tensor and matrix modalities, we consider the high dimensional asymptotic limit  where $N_1, N_2, N_3, N_4 \to \infty$, but with fixed  ratios  defined as:
\begin{align}
    \alpha_{2} := \lim_{N \to \infty} \frac{N_2}{N_1}, \qquad \alpha_{3} := \lim_{N \to \infty} \frac{N_3}{N_1}, \qquad \alpha_{4} := \lim_{N \to \infty} \frac{N_4}{N_1}.
\end{align}
The signals $\vect{u}^\star$, $\vect{v}^\star$, $\vect{x}^\star$, and $\vect{y}^\star$ are latent vectors drawn from a Gaussian prior, normalized to ensure a norm of $\sqrt{N}$ with $N$ the size of the corresponding vector. Equivalently, they can be taken on a hyper-sphere with radius $\sqrt{N}$.

\vspace{2mm}
\myparagraph{Main results ---}
This model reveals both the benefits and challenges of learning from multiple modalities.
We now summarize our main contributions:
\vspace{1mm}
\begin{enumerate}[label=\roman*),
  left=0pt,
  labelsep=0.5em,
  itemsep=1mm,
  parsep=0pt,
  topsep=0pt,
  partopsep=0pt
]
   \item Our first finding is that the correlated structure drastically affects the computational complexity of the problem. In particular, we show that one can efficiently recover the tensor through a coupling mechanism \cite{abbe2021staircase}, where recovery of the spike ${\bf u}^\star$ jump-starts the learning of the tensor. This resembles a staircase effect, similar to that observed in multi-index models \cite{abbe2021staircase,abbe2023sgd,troiani_fundamental_2024}, and significantly improves computational feasibility. This illustrates how multi-modality can help by providing auxiliary information that simplifies difficult directions and facilitates their learning.      We prove this learnability result by explicitly analyzing the performance of the most efficient known iterative algorithm for such problems: the Bayesian version of Approximate Message Passing (Bayes-AMP) \citep{celentano2021high,deshpande2014information,lesieur2017statistical}. We present a detailed phase diagram and compute the sharp phase transitions arising in this setting. Specifically, we show that the weak recovery thresholds are $\Delta_m = \sqrt{\alpha_2}$ for the matrix, and $\Delta_t = \sqrt{\alpha_3 \alpha_4} \frac{\alpha_2 - \Delta_m^2}{\alpha_2 + \Delta_m}$ for the tensor.
    
   \item While the previous result highlights the potential computational advantages of multi-modal learning, Bayesian optimal algorithms are rarely used in practice. We thus turn to a more common—and arguably more relevant—setting based on empirical risk minimization (ERM), analyzed using a variant of Approximate Message Passing that mimics ERM and serves as a proxy for gradient descent-style optimization. This approach allows us to illustrate the pitfalls that can arise when optimizing over heterogeneous structures.
    In particular, we show that the recovery thresholds can differ significantly from the Bayesian case, and we again provide explicit expressions. Crucially, {\it our analysis reveals that joint (simultaneous) risk minimization over both the tensor and the matrix significantly worsens recovery performance}: even though the tensor—strictly speaking—provides more information, joint optimization makes the matrix estimation (an otherwise easy problem) much harder, leading to a substantial shift in the weak recovery threshold. 
    
     \item In constrast, we show that if the matrix is estimated first and this estimate is then used to initialize the tensor learning, the optimal Bayesian thresholds for weak recovery are recovered in the ERM setting. This underscores the importance of sequential learning in such problems. This can be effectively implemented using a simple off-the-shelf spectral method where  we perform principal component analysis (PCA) first on the matrix, and then on the tensor.
\end{enumerate}

\vspace{2mm}
These results provide a rare instance where the interaction between different data modalities can be precisely analyzed, shedding light on how structure and learning order affect computational tractability in high-dimensional problems. By bridging the gap between statistical and algorithmic perspectives across both Bayesian and empirical risk minimization settings, our work clarifies the benefits of multi-modal integration and highlights the potential dangers of naive joint optimization. It also underscores the potential importance of a curriculum, where learning one part of the problem first enables the successful learning of harder components later—but not simultaneously. We believe these findings offer a useful perspective on structured non-convex optimization and may inform the design of more effective strategies in multi-modal machine learning settings.

\vspace{2mm}
\myparagraph{Related works ---}

{\bf Spiked models} have attracted significant attention over the past decades, both in their matrix and tensor formulations. The matrix version—where a fixed-rank signal matrix is added to a random Wigner or Wishart matrix—has been foundational in the study of principal component analysis (PCA) in the high-dimensional regime. In this setting, the optimal weak recovery threshold can be achieved efficiently using spectral methods \cite{donoho2018optimal,baik_phase_2005}. A key result is that the top eigenvalue of the perturbed random matrix undergoes a sharp phase transition at the recovery threshold: below this point, it remains within the bulk of the spectrum and the top eigenvector is uninformative (uniformly distributed on the sphere), while above it, the eigenvalue separates and the eigenvector becomes partially aligned with the underlying signal.

In contrast, the {\bf tensor version} is found to be particularly challenging, as its statistical and computational thresholds scale differently \citep{hopkins2015tensor,perry_statistical_2020,lesieur2017statistical,Lesieur_2017,wein2019kikuchi,arous2020algorithmic}: the noise threshold $\Delta = 1/\sqrt{\lambda}$ is of order $O(1)$ for statistical recovery and $O(N^{-1/2})$ for computational recovery, making signal recovery a complex and intricate task. A pure version of the matrix-tensor model has been studied in physics under the name of the mixed p-spin model \cite{crisanti2004spherical, chen2017parisi}.

{\bf Mixed spiked matrix-tensor} models were studied previously in the symmetric version where ${\bf u}^\star = {\bf v}^\star = {\bf x}^\star = {\bf y}^\star$ \cite{Sarao_Mannelli_marvels,mannelli_passed_2019,sarao_mannelli_who_2019}. In this version, the staircase phenomenon, where having learned a part of the signal helps learning the rest -- a property which is behind the findings of the present paper -- does not appear. Moreover, the symmetric version is not suitable for the interpretation in terms of multi-modal learning because there is then no distinction between the dimensionality and the number of samples. 

{\bf Approximate Message Passing} has become a tool of choice for the rigorous analysis of high-dimensional statistical inference problems and algorithms. It provides sharp asymptotic characterizations of both Bayesian and empirical risk minimization procedures in models such as spiked matrices and tensors \citep{donoho_message-passing_2009,javanmard_state_2013,richard2014statistical,lesieur2017statistical,celentano2021high,alaoui2023sampling}. In this work, we use AMP not only to derive theoretical performance guarantees, but also to uncover the algorithmic mechanisms—such as coupling and sequential learning—that govern multi-modal inference.

{\bf The staircase effect} has recently attracted significant attention in the context of learning multi-index models with neural networks \cite{abbe2022merged, abbe2023sgd,troiani_fundamental_2024}, as it significantly improves computational feasibility and is believed to play a fundamental role in how neural networks learn complex functions \cite{abbe2021staircase}. The coupling we observe between the learning of the matrix and the tensor in our setting can be seen as an analog of staircase learning for tensor models.

{\bf Multi-modal learning} in current machine learning focuses on learning complex, non-linear models for each modality that ideally cross-inform and enhance each other \citep{ngiam2011multimodal,baltruvsaitis2018multimodal,bayoudh2022survey}. There are multiple theoretical investigations of questions related to multi-modal learning in solvable high-dimensional models. E.g.  \cite{abdelaleem2023simultaneous,keup2024optimal} studied a model related to ours where two matrices with correlated spikes were observed, however, this model does not present such a rich interplay between statistical and computational aspects as arise in the model we consider here due to the tensor part. 
Authors of \cite{nandy2024multimodal} considered yet another model where two matrices are observed and approximate message passing is analyzed.

\section{Definitions}\label{sec:gen}

\myparagraph{Notations ---} Generic vectors $\vect{x}$ are written in bold lowercase. In the following, we will use italic font to highlight random vectors conditioned on $\vect{x}$  (e.g. $\rdmvect{y}$). For two vectors $\vect{a}, \vect{b} \in \mathbb{R}^d$, we denote by $\langle \vect{a} , \vect{b} \rangle := \sum_{i=1}^{d} a_i b_i$ their scalar product, $\| \vect{a} \| := \sqrt{\sum_{i=1}^d a_i^2}$ will always correspond to the $\ell^2$-norm of $\vect{a}$. Matrices are written in bold uppercase (e.g. $\mat{M}$). Similarly to vectors, when needed, we will write them with italic font (e.g. $\rdmmat{M}$) to highlight random matrices conditioned on $\vect{x}$. Another step ahead, we note tensors as $\tens{T}$ and in italic font $\rdmmat{T}$ when conditioned on $\vect{x}$. We adopt a uniform notation to represent contractions between tensors and lower-order objects (vectors or matrices), regardless of the contraction direction. Given a tensor $\tens{T} \in \mathbb{R}^{n \times m \times p}$, we denote its contraction with a vector $\vect{a} \in \mathbb{R}^k$ with $ k \in \{n,m,p\}$ as the matrix $(\tens{T} \cdot \vect{a}) := \sum_{i=1}^k T_{\dots i \dots} a_i$ where the index \(i\) corresponds to the dimension being contracted. Similarly, for a matrix \(\mat{A} \in \mathbb{R}^{k \times l}\), where \(k\) and $l$ match distinct dimensions of \(\tens{T}\), we also denote the contraction as: $\tens{T} \cdot \mat{A} := \sum_{i=1,j=1}^{k,l} T_{\dots i \dots j \dots} A_{ij}$. We define the \emph{Frobenius norm} of $\tens{T}$ as $\norm{\tens{T}}_F=\sqrt{\sum_{i=1,j=1,k=1}^{n,m,p} T^2_{ijk}}$. For $\vect{u}_1 \in \mathbb{R}^{n_1}, \dots, \vect{u}_k \in \mathbb{R}^{n_k}$, $(\vect{u}_1 \otimes \dots \otimes \vect{u}_k) \in \mathbb{R}^{n_1 \times \dots \times n_k}$ denotes the usual tensor product $(\vect{u}_1 \otimes \dots \otimes \vect{u}_k)_{i_1 \dots i_k} := u_{1,i_1} \dots u_{k,i_k}$. We denote by $\setim= \{ 1,2 \}$ and $\setit = \{1,3,4\}$. 

Given four estimators $\{\hat{\vect{w}}_{k}\}_{k \in\llbracket 4 \rrbracket}$ for each component $\vect{u}, \vect{v},\vect{x}, \vect{y}$ of the signal,  normalized such that $\| \vect{w}_k \| = \Theta(\sqrt{N_k})$, we define the \emph{mean square error} as $\mathrm{MSE}_k=\norm{\hat{\vect{w}}_k - \vect{w}^\star_k }^2$ where $\para{\vect{w}_1^\star, \vect{w}_2^\star, \vect{w}^\star_3, \vect{w}_4^\star}$ are the planted signals $\para{\vect{u}^\star, \vect{v}^\star, \vect{x}^\star, \vect{y}^\star}$.

\vspace{2mm}
\myparagraph{AMP familly ---}  Formally we define the generic (class of) AMP algorithm(s) for the spiked matrix/tensor models as follows:
\begin{mydef} 
\label{def:AMP}
For  $k \in \llbracket 4 \rrbracket$, let $f_k: \mathbb{R}_+ \times \mathbb{R} \ni (a,b) \mapsto f_k(a,b) \in \mathbb{R}$ Lipschitz and denote by $  \vect{f}(a,\vect{b}) :=(f(a,b_1), \dots, f(a,b_{N_k}))^T \in \mathbb{R}^{N_k}$ its vectorized version with respect to its second argument, $\vect{m}_{0,k} \in \mathbb{R}^{N_k}$ and $\rho>0$, we define the Approximate Message Passing $\mathtt{AMP}( \{\vect{f}_k,\vect{m}_{0,k} \}_{k \in\llbracket4 \rrbracket }, \rho)$ for the spiked matrix-tensor model with denoising functions $\vect{f}_k$ and initializations $\vect{m}_{0,k}$  as the following sequence of iterates for any $t \in \mathbb{N}$:
\begin{align}
\begin{cases}
r^t_k
&=
 \frac{\mathds{1}_{k \in \setim}}{\Delta_m} \alpha_{3-k} \sigma^t_{3-k} 
+  
 \frac{\mathds{1}_{k \in \setit }}{\rho \Delta_t N_1} \;
\sum_{l \in \setit \setminus k}\alpha_l\,  \sigma^t_l \prod_{m \in \setit \setminus \{k,l\} }\langle \vect{w}^{t}_m, \vect{w}^{t-1}_m \rangle ,
\\[5pt]
\vect{b}^t_{k} 
&=
 \frac{\mathds{1}_{k \in \setim }}{\Delta_m \sqrt{N_1}} \; \rdmmat{Y}_m \hat{\vect{w}}^t_{3-k} 
+
 \frac{\mathds{1}_{k \in \setit }}{\rho \Delta_t N_1} \; \rdmtens{Y}_t \cdot \Big(\bigotimes_{l \in \setit \setminus k} \hat{\vect{w}}^t_l \Big)  -  r^t_k \, \hat{\vect{w}}^{t-1}_k \, ,
\\[5pt]
A^t_k 
&=
 \frac{\ind_{k \in \mathcal{I}_m}}{\Delta_m N_1} \| \hat{\vect{w}}^t_{3-k} \|^2 
+
\frac{\ind_{k \in \mathcal{I}_t}}{\rho \Delta_t N_1^2} \prod_{l \in \setit \setminus \{k \}} \| \hat{\vect{w}}^t_{l} \|^2 \, ,
\\[5pt]
\hat{\vect{w}}^{t+1}_k &= \vect{f}_k \para{A^t_k, \vect{b}^t_k} \, , \\[5pt]
\sigma^{t+1}_k &= \frac{1}{N_k} ( \mathrm{div}_{\vect{b}}  \vect{f}_k) \para{A^t_k, \vect{b}^t_k} \, .    
\end{cases}
\end{align}
and with initial conditions $\hat{\vect{w}}^{t=0}_k = \vect{m}_{0,k}$, $\hat{\vect{w}}^{t=-1}_k = \vect{0}$, $\sigma^t_k=0$ for $k \in \llbracket 4 \rrbracket$.   
\end{mydef}
The family of denoising functions $\{ \vect{f}_k \}_{k}$ plays a central role in the AMP iterations by governing the update of the estimator based on the current signal estimate $\vect{b}$ and the noise structure characterized by $A$. Although our results are stated for generic denoising functions, we will primarily focus on two classes, as described below.

\begin{mydef} \label{def:ex_amp} (Denoising functions and associated AMP algorithm)

\begin{itemize}[left=1pt]
    \item \underline{Bayes AMP:} Let $a\geq0$, $\vect{x}, \vect{g} \sim \mathsf{N}(0,\mat{I}_d)$ and independent and $ \vect{x} + \sqrt{a} \vect{g}$ the corresponding Gaussian channel. We call the \emph{Gaussian denoising function} the map $\vect{\eta} : \mathbb{R}_+ \times \mathbb{R}^{d} \ni (a,\vect{b}) \mapsto \vect{\eta}(a,\vect{b}) \in \mathbb{R}^d$ given as the posterior mean for a fixed value of the Gaussian channel:
    \begin{align}
    \label{eq:denoising_bo}
        \vect{\eta}(a,\vect{b}) := \mathbb{E} \{ \vect{x} |  \vect{x} + \sqrt{a} \vect{g} =\vect{b} \} = \frac{1}{1+a} \, \vect{b} \, .
    \end{align}
Furthermore, we call the \emph{Bayes Approximate-Message-Passing} (Bayes-AMP) algorithm, the AMP within the class defined in Def.~\ref{def:AMP}  with $\vect{f}_k=\vect{\eta}$  for any $k \in \llbracket 4 \rrbracket$ and $\rho=1$. 
    \item \underline{Maximum-likelihood AMP:} Let  $\vect{b} \in \R^{N_k}$, the maximum likelihood denoising function is given by
    \begin{align}\label{eq:denoising_ml}
        \vect{\eta}_{\mathrm{ML}} \left(\vect{b}\right) &:= \sqrt{N_k}\frac{\vect{b}}{\norm{\vect{b}}} \; .
    \end{align}
    Setting the denoising function $\vect{f}_k = \vect{\eta}_{\mathrm{ML}}$ for $k \in \llbracket 4 \rrbracket$ within the class of AMP algorithms defined in Def.~\ref{def:AMP} define the \emph{Maximum Likelihood Approximate-Message-Passing} (ML-AMP) algorithm. 
    
    This expression of $\vect{\eta}_{\mathrm{ML}}$ results in a denoising function that is not separable. To address this, we broaden the class of admissible denoisers. This corresponds in our case to replace the divergence $\mathrm{div}_{\vect{b}} \vect{f}_k$ by $\para{\sqrt{N_k} \norm{\vect{b}_k}}^{-1}$. We mention that at leading order these two quantities are equal.

\end{itemize}
\end{mydef}

\textbf{State Evolution} --- \hspace{0.5cm} In the high-dimensional setting, the performance of classical AMP algorithms breaks down to the analysis of a low-dimensional autonomous iteration known as State Evolution (SE), which updates the value of the empirical overlaps over iterations of AMP. 
 For the class of AMP algorithms described in Def.~\ref{def:AMP}, this iteration is given as follows: Let $g_{k} \equiv g_{k, \rho}: \mathbb{R}^4 \ni \vect{x} \mapsto g_{k}(\vect{x}) \in \mathbb{R}$ defined for $k \in \llbracket 4 \rrbracket$ by
    \begin{align}
       g_k\para{\vect{x}} \equiv g_{k, \rho} \para{\vect{x}} &:= \frac{\ind_{k \in \mathcal{I}_m}}{\Delta_m} x_{3-k} + \frac{\ind_{k \in \mathcal{I}_t}}{\rho \Delta_t} \prod_{l \in \setit \setminus \{k \}}x_l\, , \label{eq:g_entry_wise} 
    \end{align} 
then the SE iteration associated to $\mathtt{AMP}(\{ \vect{f}_k, \vect{m}_{0,k} \}_k, \rho)$ is defined for any $t \in \mathbb{N}$ by 
    \begin{align}
    \label{eq:SE}
    \begin{cases}
        q_k^{t+1} &= \mathbb{E}_{ \rdm{V}_k \sim \mathsf{P}_k, \rdm{G}  \sim \mathsf{N}(0,1) } \left[ \eta\para{g_k\para{\vect{q}^t}, g_k\para{\vect{q}^t} \rdm{V}_k + \sqrt{g_k\para{\vect{s}^t}} \rdm{G} } \mathrm{V}_k\right] \, , \\
        s_k^{t+1} &= \mathbb{E}_{ \rdm{V}_k \sim \mathsf{P}_k, \rdm{G}  \sim \mathsf{N}(0,1) }  \left[ \eta\para{g_k\para{\vect{q}^t}, g_k\para{\vect{q}^t} \rdm{V}_k + \sqrt{g_k\para{\vect{s}^t}} \rdm{G} }^2 \right] \, ,
    \end{cases}
    \end{align}    
with initial conditions $\vect{q}^{t=0}=\vect{q}_0$. While the denoising function $\eta$ is not separable in the case of ML-AMP, the correspondence still holds, and the corresponding SE can be found in Eq.~\ref{eq:state_evolution_linearized}. The precise relation between the AMP iterations and its SE is given by the following proposition whose proof is an immediate consequence of \cite{javanmard_state_2013,feng2022unifying}: 
\begin{myprop} Consider the class $\mathtt{AMP}( \{\vect{f}_k,\vect{m}_{0,k} \}_{k \in\llbracket4 \rrbracket }, \rho)$ initialized in such a way that $\langle\vect{m}_{0,k},\vect{w}^\star_k\rangle/N_k \to q_{0,k}$ for $k \in \llbracket 4 \rrbracket$, then  for any $t \in \mathbb{N}$  with $\hat{\vect{w}}^t_k$ as defined in Def.~\ref{def:AMP}, we have $\langle \hat{\vect{w}}_k, \vect{w}^\star_k \rangle/N_k \to q^t_k$ where $\vect{q}^t=(q_1,\dots,q_4)$ follows the SE of Eq.~\eqref{eq:SE} initialized with $\vect{q}^{t=0}=(q_{0,1}, \dots, q_{0,4})$. 
\end{myprop}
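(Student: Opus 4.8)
\myparagraph{Proof plan ---} The plan is to recognise the iteration of Def.~\ref{def:AMP} as a particular instance of the general multi-block (non-symmetric, multilinear) approximate message passing recursion whose state evolution is established in \cite{javanmard_state_2013,feng2022unifying}, and then to verify that the hypotheses of that master theorem hold here and that the low-dimensional recursion it produces coincides with Eq.~\eqref{eq:SE}. Concretely, I would regard the four estimators $(\hat{\vect{w}}^t_1,\dots,\hat{\vect{w}}^t_4)$ as the blocks of a single iterate on $\R^{N_1}\times\R^{N_2}\times\R^{N_3}\times\R^{N_4}$: the matrix channel $\rdmmat{Y}_m$ contributes a bilinear AMP term coupling blocks $1$ and $2$ (a standard rectangular matrix AMP), and the tensor channel $\rdmtens{Y}_t$ contributes a trilinear term coupling blocks $1$, $3$ and $4$ (a non-symmetric spiked-tensor AMP, cf.~\cite{richard2014statistical}). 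The terms $r^t_k\,\hat{\vect{w}}^{t-1}_k$ and $A^t_k$ in Def.~\ref{def:AMP} are exactly the Onsager debiasing and variance-tracking quantities attached to this composite recursion.

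The core of the argument is the standard conditioning (Bolthausen) step. At iteration $t$ one conditions on the $\sigma$-algebra generated by all iterates produced so far together with the signals; by Gaussianity, each of $\rdmmat{Z}$ and $\rdmtens{Z}$ decomposes as a part measurable with respect to this $\sigma$-algebra — which, after subtraction of the Onsager term, contributes negligibly — plus a fresh Gaussian component independent of the past. Performing this decomposition, and using that the denoisers $\vect{f}_k$ are Lipschitz so that empirical overlaps and squared norms concentrate, that the products $\prod_{m}\langle\vect{w}^t_m,\vect{w}^{t-1}_m\rangle$ appearing in $r^t_k$ converge to their deterministic limits, and that the planted vectors are Gaussian on the sphere with $\norm{\vect{w}^\star_k}^2/N_k\to 1$, one obtains that the pre-denoising field is asymptotically
\[
\vect{b}^t_k \;\overset{d}{\approx}\; g_k(\vect{q}^t)\,\vect{w}^\star_k + \sqrt{g_k(\vect{s}^t)}\;\vect{g}_k, \qquad \vect{g}_k\sim\mathsf{N}(0,\mat{I}_{N_k}), \qquad A^t_k \to g_k(\vect{s}^t),
\]
with $g_k$ the function of Eq.~\eqref{eq:g_entry_wise}, $q^t_k=\langle\hat{\vect{w}}^t_k,\vect{w}^\star_k\rangle/N_k$ and $s^t_k=\norm{\hat{\vect{w}}^t_k}^2/N_k$. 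The only feature beyond a plain matrix AMP is that the effective signal and noise strengths $g_k$ are themselves nonlinear (product) functions of the current overlaps and norms of the \emph{other} blocks — precisely the mechanism producing the matrix--tensor coupling — but this is already within the scope of the multilinear master theorem.

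Given the effective channel, one plugs it into $\hat{\vect{w}}^{t+1}_k=\vect{f}_k(A^t_k,\vect{b}^t_k)$ and applies the pseudo-Lipschitz test-function convergence of the master theorem; since $\vect{f}_k$ is Lipschitz, the pair $(v,\vect{f}_k(a,av+\sqrt{c}\,g))$ is pseudo-Lipschitz of order $2$, giving $\langle\hat{\vect{w}}^{t+1}_k,\vect{w}^\star_k\rangle/N_k\to\mathbb{E}[\vect{f}_k(g_k(\vect{q}^t),\,g_k(\vect{q}^t)\rdm{V}_k+\sqrt{g_k(\vect{s}^t)}\,\rdm{G})\,\rdm{V}_k]$ and the analogous limit for $s^{t+1}_k$, which is exactly Eq.~\eqref{eq:SE}; specialising $\vect{f}_k=\vect{\eta}$ and using $\eta(a,b)=b/(1+a)$ recovers the Bayes-AMP form written there. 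The base case $t=0$ is immediate since $\hat{\vect{w}}^{-1}_k=\vect{0}$ and $\sigma^0_k=0$ annihilate the Onsager term, while the hypothesis $\langle\vect{m}_{0,k},\vect{w}^\star_k\rangle/N_k\to q_{0,k}$ fixes the initialisation; the claim then follows by induction on $t$.

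The one point needing extra care is the ML denoiser $\vect{\eta}_{\mathrm{ML}}$ of Eq.~\eqref{eq:denoising_ml}, which is not separable; here I would invoke the reduction noted right after that equation — replacing $\mathrm{div}_{\vect{b}}\vect{f}_k$ by $(\sqrt{N_k}\,\norm{\vect{b}_k})^{-1}$, equal at leading order — so that this case still falls under \cite{feng2022unifying}, with the associated linearised state evolution of Eq.~\ref{eq:state_evolution_linearized}. I expect the genuine obstacle to be bookkeeping rather than conceptual: matching the tensor-contraction scalings ($1/N_1$ in $\vect{b}^t_k$ versus $1/N_1^2$ in $A^t_k$) and the cross-mode Onsager products $\prod_{m\in\setit\setminus\{k,l\}}\langle\vect{w}^t_m,\vect{w}^{t-1}_m\rangle$ to the normalisation conventions of the cited AMP frameworks, and checking the uniform integrability needed so that the moment limits hold, not merely convergence in distribution.
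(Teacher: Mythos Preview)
Your proposal is correct and follows exactly the approach the paper takes: the paper simply states that the proposition is ``an immediate consequence of \cite{javanmard_state_2013,feng2022unifying}'' without giving any further argument, and what you outline is precisely how one embeds the four-block matrix--tensor iteration into those master AMP/state-evolution frameworks and reads off Eq.~\eqref{eq:SE}. Your write-up is in fact more detailed than the paper's, which offers no proof beyond the citation.
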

Understanding the performance of AMP algorithms—particularly through the phase diagram that borders the parameter regions where each signal component can be inferred—boils down to analyzing the stability of the fixed points of the associated state evolution (SE) equations. Specifically, for the two primary examples described in Def.\ref{def:ex_amp}, these SE equations can be expressed in the following simple form. We recall the definition of the function $\vect{g}(.)=(g_1(.), \dots,g_4(.))$ with $g_k \equiv g_{k,\rho}$ given by Eq.~\eqref{eq:g_entry_wise}, then we have
\begin{itemize}[left=1pt]
    \item \underline{for the Bayes-AMP:} if we set $\vect{f}_{\mathrm{Bayes}}:\mathbb{R}_+^4 \ni \vect{x} \mapsto (f(x_k))_{k \in \llbracket 4 \rrbracket}$ with $f(x)= x/(1+x)$, then the SE of Eq.~\eqref{eq:SE} writes for this case as
    \begin{align}
    \label{eq:state_evolution_BO}
        \vect{q}^{t+1} = \vect{f}_{\mathrm{Bayes}}(\vect{g}_{\rho=1}(\vect{q}^{t})) \, ,
    \end{align}
    \item \underline{for the ML-AMP:} if we set $\vect{f}_{\mathrm{ML}}:\mathbb{R}_+^4 \ni \vect{x} \mapsto (f_k(x_k))_{k \in \llbracket 4 \rrbracket}$ with $f_k(x):= \frac{x}{\sqrt{g_{k,\rho^2}(1)+x^2}}$, then the SE of Eq.~\eqref{eq:SE} writes for this case as:
    \begin{align}
        \vect{q}^{t+1} = \vect{f}_{\mathrm{ML}}(\vect{g}(\vect{q}^{t})) \, .
    \label{eq:state_evolution_linearized}
    \end{align}
\end{itemize}
the fixed points of the state evolution equations are also known to be minimizers of the free energy. Bayes-SE corresponds to the Bayes-optimal free energy derived in App.~\ref{sec:free_e}.

\begin{mydef} \label{def:recovery} Given four estimators $\{\hat{\vect{w}}_{k}\}_{k \in\llbracket 4 \rrbracket}$ for each component $\vect{u}, \vect{v},\vect{x}, \vect{y}$ of the signal,  normalized such that $\| \vect{w}_k \| = \Theta(\sqrt{N_k})$,  we say that we have 
\begin{itemize}
    \item \emph{matrix and tensor weak recovery} if and only if we have both 
    \begin{align}
        \|(\vect{u} \otimes \vect{v})^T(\vect{\hat{w}}_1 \otimes \vect{\hat{w}}_2) \| = \Theta(1) \qquad \mbox{and} \qquad   \|(\vect{x} \otimes \vect{y})^T(\vect{\hat{w}}_3 \otimes \vect{\hat{w}}_4) \| = \Theta(1) \, ,
    \end{align}
    \item  \emph{matrix-only weak recovery} if and only if we have both 
    \begin{align}
        \|(\vect{u} \otimes \vect{v})^T(\vect{\hat{w}}_1 \otimes \vect{\hat{w}}_2) \| = \Theta(1) \qquad \mbox{and} \qquad   \|(\vect{x} \otimes \vect{y})^T(\vect{\hat{w}}_3 \otimes \vect{\hat{w}}_4) \| = o(1) \, ,
    \end{align}
    \item  \emph{no weak recovery} if and only if we have both 
    \begin{align}
        \|(\vect{u} \otimes \vect{v})^T(\vect{\hat{w}}_1 \otimes \vect{\hat{w}}_2) \| = o(1) \qquad \mbox{and} \qquad   \|(\vect{x} \otimes \vect{y})^T(\vect{\hat{w}}_3 \otimes \vect{\hat{w}}_4) \| = o(1) \, .
    \end{align}
\end{itemize}
\end{mydef}

\section{Optimal efficient algorithm: The performance of Bayes AMP}\label{sec:bo_amp}
\emph{Bayes-AMP}, as defined in Def.~\ref{def:ex_amp}, is provably optimal among all first-order algorithms in a large class of inference problems, see \cite{celentano2021high} or conjectures in \cite{gamarnik2022disordered} and its performance in the high dimensional setting is ruled by SE of Eq.~\eqref{eq:state_evolution_BO}. We introduce the $5$-dimensional vector of parameters as
\begin{align}
    \vect{\theta} := (\Delta_m, \Delta_t,\alpha_1,\alpha_2,\alpha_3)^T \in {(\mathbb{R}_+^\star)}^5 \, , 
\end{align}
and partition the parameter region ${(\mathbb{R}_+^\star)}^5$ in three distinct regions:
\begin{align}
    {(\mathbb{R}_+^\star)}^5 = \mathcal{R}_0 \sqcup \mathcal{R}_m \sqcup \mathcal{R}_t \, , 
\end{align}
the \emph{no recovery region} $\mathcal{R}_0$, the \emph{matrix recovery region} $\mathcal{R}_m$ and the \emph{matrix and tensor recovery region} $\mathcal{R}_t$ with 
\begin{align}
    \mathcal{R}_0 \; &:= \; \left\{ \vect{\theta} \in \para{\mathbb{R}_+^\star}^5 \;|\; \Delta_m > \sqrt{\alpha_2}\right\} \, ,  \label{eq:region_0}\\
    \mathcal{R}_m \; &:= \; \left\{ \vect{\theta} \in \para{\mathbb{R}_+^\star}^5 \; | \; \Delta_m < \sqrt{\alpha_2} \; \textrm{and} \; \Delta_t > \delta^{\mathrm{Bayes}}_{c} \right\}\, , \label{eq:region_m} \\
    \mathcal{R}_t \; &:= \; \left\{ \vect{\theta} \in \para{\mathbb{R}_+^\star}^5 \; | \; \Delta_t < \delta^{\mathrm{Bayes}}_{c} \right\}\, ,  \label{eq:region_t}
\end{align}
and where the constant $\delta^{\mathrm{Bayes}}_{c} \equiv \delta^{\mathrm{Bayes}}_{c}(\alpha_2,\alpha_3,\alpha_4,\Delta_m)$ is given by 
\begin{align}
&\label{eq:threshold_BO}\delta^{\mathrm{Bayes}}_{c} 
:=
\;\sqrt{\alpha_3 \alpha_4}\, \frac{\alpha_2 - \Delta_m^2}{\alpha_2+\Delta_m} \, . 
\end{align}
Note that in $\mathcal{R}_t$, the condition $\Delta_m < \sqrt{\alpha_2}$ is implicitly contained by the other constraints. One may notice that the constraint in $\mathcal{R}_0$ is independent of the parameters $(\Delta_t,\alpha_3, \alpha_4)$ of the tensor channel and corresponds to the information-theoretic and algorithmic transition in the matrix channel alone, see \cite{Lesieur_2017}.  An illustration of the phase space partition projected respectively in the $(\Delta_m,\Delta_t)$-plane and the $(\alpha_2,\alpha_3)$-plane is given in Fig.~\ref{fig:no_hard_bo}. Our next result characterizes the fixed point of the associated state evolution.

\begin{mytheo} \label{prop:phase_se_bo} Consider the SE of Eq.~\eqref{eq:state_evolution_BO} then the following holds:
\begin{itemize}[left=15pt]
    \item[(i)] the trivial fixed point $\vect{0}_4=(0,0,0,0)^T$ is a stable fixed point if and only if $\vect{\theta} \in \mathcal{R}_0$;
    \item[(ii)] there exists a unique fixed point of the form $\vect{q}_m=\para{q_{m,1}, q_{m,2}, 0, 0}^T$ with $q_{m,1},q_{m,2} \neq 0$ if and only if $\vect{\theta} \in (\mathbb{R}_+^\star)^5 \setminus \mathcal{R}_0$ and this fixed point is stable if and only if $\vect{\theta} \in \mathcal{R}_m$.  Additionally, the value $q_{m,1}, q_{m,2}$ are given by
        \begin{align} \label{eq:qu_mat_bo}
            q_{m,1} = \frac{\alpha_2 - \Delta_m^2}{\alpha_2+\Delta_m} \; , \hspace{1cm} q_{m,2} = \frac{1}{\alpha_2}\frac{\alpha_2- \Delta_m^2}{1+\Delta_m} \; ;
        \end{align}
    \item[(iii)] there exists at least one stable fixed point $\vect{q}_t=(q_{t,1}, \dots, q_{t,4})^T$  with non-zero entries $q_{t,i} \neq 0$ for $i \in \{1,\dots,4\}$ if $\vect{\theta} \in \mathcal{R}_t$ and furthermore there is no other stable fixed point having a zero component within that region. 
\end{itemize}
\end{mytheo}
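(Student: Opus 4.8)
The plan is to study the Bayes state evolution of Eq.~\eqref{eq:state_evolution_BO} as the iteration of the single map $\vect{F}:=\vect{f}_{\mathrm{Bayes}}\circ\vect{g}$ on the cube $[0,1]^4$, and to read off the three claims from the geometry of the fixed points of $\vect F$ together with the spectra of its Jacobians, with ``stable'' understood throughout as the Jacobian at the fixed point having spectral radius $<1$. Two structural facts drive everything. First, from Eq.~\eqref{eq:g_entry_wise} each coordinate $g_k$ is a sum of monomials with positive coefficients in the variables $\{q_j:j\ne k\}$ only, and $f(x)=x/(1+x)$ is increasing with $f([0,\infty))=[0,1)$; hence $\vect F$ is continuous, maps $[0,1]^4$ into $[0,1)^4$, is monotone (order-preserving) for the coordinatewise order, and its fixed points are exactly the solutions of $q_k/(1-q_k)=g_k(\vect q)$, with $\vect 0$ always one of them. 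Second, the dependency pattern ($F_1$ on $q_2,q_3,q_4$; $F_2$ on $q_1$; $F_3$ on $q_1,q_4$; $F_4$ on $q_1,q_3$) makes $D\vect F$ sparse, so that at the fixed points of interest it is block-diagonal or block-triangular and its spectrum reduces to that of explicit $2\times 2$ blocks.

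For (i) and the stability parts of (ii) I would simply linearize. At $\vect 0$ the quadratic (tensor) monomials have vanishing gradient, so $D\vect F(\vect 0)$ is block-diagonal: on the $(q_1,q_2)$ block it is antidiagonal with spectral radius $\sqrt{\alpha_2}/\Delta_m$, and it vanishes on the $(q_3,q_4)$ block; hence $\vect 0$ is stable iff $\sqrt{\alpha_2}/\Delta_m<1$, i.e. iff $\vect\theta\in\mathcal R_0$, which is (i). For $\vect q_m$, the face $\{q_3=q_4=0\}$ is invariant and there the iteration reduces to the scalar equation $q_1=\phi(q_1)$ for a strictly concave increasing $\phi$ with $\phi(0)=0$ and $\phi'(0)=\alpha_2/\Delta_m^2$; a concave map vanishing at $0$ has a (unique) positive fixed point iff $\phi'(0)>1$, i.e. iff $\Delta_m<\sqrt{\alpha_2}$, i.e. iff $\vect\theta\notin\mathcal R_0$, and solving the two coupled equations on the face gives the explicit values \eqref{eq:qu_mat_bo}. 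At $\vect q_m$ the cross-derivatives $\partial_{q_3}F_1,\partial_{q_4}F_1,\partial_{q_1}F_3,\partial_{q_1}F_4$ all vanish because $q_3=q_4=0$, so $D\vect F(\vect q_m)$ is again block-diagonal; its $(q_1,q_2)$ block is antidiagonal with eigenvalues $\pm\sqrt{\phi'(q_{m,1})}$ and $\phi'(q_{m,1})\in(0,1)$ (concavity forces slope $<1$ at the positive fixed point of a concave map), while its $(q_3,q_4)$ block is antidiagonal with eigenvalues $\pm q_{m,1}\sqrt{\alpha_3\alpha_4}/\Delta_t=\pm\,\delta^{\mathrm{Bayes}}_c/\Delta_t$ by \eqref{eq:qu_mat_bo}--\eqref{eq:threshold_BO}; so $\vect q_m$ is stable iff $\Delta_t>\delta^{\mathrm{Bayes}}_c$, i.e. iff $\vect\theta\in\mathcal R_m$.

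For (iii) I would first classify the fixed points lying on the boundary of the cube: a short chain of implications off the dependency pattern ($q_1=0\Rightarrow\vect q=\vect 0$; $q_2=0\Rightarrow q_1=0$; $q_3=0$ with $q_1\ne0\Rightarrow q_4=0\Rightarrow\vect q=\vect q_m$; symmetrically with $q_4$) shows the only such fixed points are $\vect 0$ and $\vect q_m$, and in $\mathcal R_t$ one has $\Delta_m<\sqrt{\alpha_2}$ and $\Delta_t<\delta^{\mathrm{Bayes}}_c$, so by (i)--(ii) both are unstable --- this gives the ``no other stable fixed point with a vanishing coordinate'' statement. For existence of a fully-nonzero fixed point I would exploit monotonicity: set $\kappa:=q_{m,1}\sqrt{\alpha_3\alpha_4}/\Delta_t$ (which is $>1$ exactly on $\mathcal R_t$), choose $\varepsilon>0$ sufficiently small, and test the point $\vect q_0:=(q_{m,1},q_{m,2},\varepsilon,\varepsilon\sqrt{\alpha_3/\alpha_4})$; using the fixed-point equations of $\vect q_m$ for the first two coordinates and the elementary equivalence $f(\kappa t)\ge t\iff t\le 1-\kappa^{-1}$ for the last two, one checks $\vect F(\vect q_0)\ge\vect q_0$ coordinatewise, so by monotonicity $\vect F^t(\vect q_0)$ is nondecreasing and bounded, hence converges to a fixed point $\vect q_t\ge\vect q_0$ with all four coordinates strictly positive.

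The remaining and genuinely hard step is to secure stability of a fully-nonzero fixed point. I would take the maximal fixed point $\vect q^{\max}=\lim_t\vect F^t(\vect 1)$ (which exists by monotonicity/Tarski and satisfies $\vect q^{\max}\ge\vect q_t>\vect 0$): its Jacobian is a nonnegative matrix whose dependency graph is strongly connected, hence irreducible, so Perron--Frobenius gives it a simple spectral radius $\lambda$ with strictly positive eigenvector $\vect v$, and maximality excludes $\lambda>1$ (else $\vect q^{\max}+\delta\vect v$ would be a strict subsolution producing, by monotone iteration, a fixed point strictly above $\vect q^{\max}$). The obstacle is the degenerate case $\lambda=1$: I would rule it out either by pushing the expansion of $\vect F$ to second order along the Perron direction $\vect v$ and using the strict concavity of $f$ to exclude a tangency of $\vect F$ to the diagonal at an interior fixed point, or --- more robustly --- by invoking that SE fixed points are critical points of the Bayes-optimal free energy (App.~\ref{sec:free_e}) and identifying $\vect q^{\max}$ in $\mathcal R_t$ as a strict local minimum, which forces $\lambda<1$. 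The crux of the whole theorem is concentrated in this last point: obtaining strict linear stability of the tensor-recovery fixed point on the entire open region $\mathcal R_t$, rather than only on a full-measure subset of it.
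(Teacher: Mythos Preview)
Your treatment of (i) and (ii) is essentially the same as the paper's: both linearize the SE map at $\vect 0$ and at $\vect q_m$ and read off the $2\times 2$ antidiagonal blocks (App.~\ref{sec:app_stab}). Your concavity argument for the uniqueness of the positive fixed point on the face $\{q_3=q_4=0\}$ and for $\phi'(q_{m,1})<1$ is a clean addition; the paper instead solves the two--by--two system explicitly (App.~\ref{sec:app_sys}) and verifies the $(q_1,q_2)$ stability condition by direct computation, which gives the same outcome.

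Part (iii) is where you genuinely diverge. The paper does not construct a subsolution or invoke Tarski/Perron--Frobenius at all: it argues entirely through the variational structure of App.~\ref{sec:free_e}. The Bayes--SE fixed points are the critical points of the potential $\tilde\Phi$; in $\mathcal R_t$ the only fixed points with a vanishing coordinate are $\vect 0$ and $\vect q_m$ (same short chain of implications you give), both unstable, hence not minimizers; the global minimizer on the compact cube is therefore a fully nonzero fixed point, and being a minimizer it is stable. So the paper gets existence \emph{and} stability in one stroke from the free energy, at the price of invoking that structure. Your route buys a self-contained, order-theoretic proof of existence (the subsolution $\vect q_0$ and monotone iteration are nice and correct), and your Perron--Frobenius argument cleanly gives $\lambda\le 1$ at $\vect q^{\max}$; but you correctly flag that excluding $\lambda=1$ is the crux, and your ``more robust'' fallback is precisely the free-energy argument the paper uses from the outset. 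In short: your approach is more elementary for existence but does not ultimately escape the variational input for strict stability, whereas the paper leans on it immediately and skips the monotone machinery.
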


\begin{proof}
\leavevmode\vspace{-0.5em}
    \begin{itemize}
        
        \item Existence is immediate and the proof of the stability is given in App.~\ref{subsec:app_null_fp}.
        
        \item The existence, unicity and expression comes from Appendix~\ref{sec:app_sys} with $x_1 = q_u$, $x_2 = q_v$, $a = \alpha_2$, $b = 1$, and $\Delta = \Delta_m$
        
        Additionally, the stability of this fixed point is given by Appendix~\ref{subsec:app_matrix_fp}.
        
        \item If $\vect{\theta} \in \mathcal{R}_t$, then none of $\vect{0}$ nor $\vect{q}_m$ are global minimizer of the free energy because not stable. 
        
        Then, the only possible candidates left are fixed points of the form of $\vect{q}_t$ with $\forall k \; q_{t,k}\neq 0$. Indeed, one can check that any other possibility cannot be fixed point of SE. For instance, the vector $\para{q_1,0,q_2, q_3}$ does not work (meaning that we cannot recover only the tensor). 
        
        Then, $\vect{q}_t$ being the only possible stable fixed points, one of them is the global minimizer of the free energy, in particular, is stable.

        (Indeed, the free energy must have a global minimizer since it is defined on the compact $\left[ 0,1\right]^4$)
    \end{itemize}
\end{proof}

In particular, it yields a transition of order $\mathcal{O}(1)$ for the recovery of both the matrix and the tensor, whereas the recovery of a tensor spike alone occurs only at order $\mathcal{O}(N^{-1/2})$. In fact, the tensor components cannot be recovered independently without recovering the matrix. This result highlights a \emph{staircase} phenomenon \cite{abbe2021staircase}, in which the interplay between matrix and tensor observations shapes the order in which components become learnable.

\begin{figure}[t]
    \centering
    \begin{subfigure}{\textwidth}
        \centering
        \includegraphics[width=0.48\textwidth]{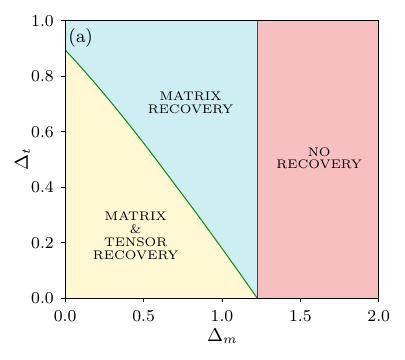}
        \hspace{0.01cm}
        \includegraphics[width=0.48\textwidth]{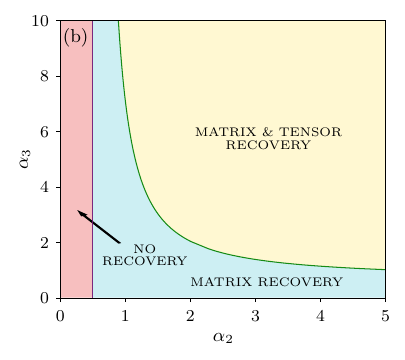}
    \end{subfigure}
    \caption{Phase diagrams for Bayes-AMP algorithm, (a) in $\para{\Delta_m, \Delta_t}$ space  with size ratios $\alpha_2 = 1.5$, $\alpha_3 = 0.8$, and $\alpha_4 = 1$ and in $\para{\alpha_2, \alpha_3}$ space (b) in $\para{\alpha_2, \alpha_3}$ space with noise $\Delta_m = 0.7$, $\Delta_t=0.8$ and a size ratio $\alpha_4=1$. The red, blue and yellow regions respectively delimit $\mathcal{R}_0$, $\mathcal{R}_m$ and $\mathcal{R}_t$. The purple and green solid lines indicate the phase transitions between the regions $\mathcal{R}_0$ and $\mathcal{R}_m$, and between $\mathcal{R}_m$ and $\mathcal{R}_t$, respectively. These transitions are implicitly defined by $\Delta_m = \sqrt{\alpha_2}$ and $\Delta_t = \delta_c^{\mathrm{Bayes}}$.}
    \label{fig:no_hard_bo}
\end{figure}

\section{Maximum Likelihood Estimation}\label{sec:ml_amp}
While it is  good news that there exists an  algorithm that can learn both the matrix and the tensor efficiently, the real  problem for modern machine learning is whether or not it can be done so with gradient descent over the maximum likelihood loss (ML) that reads 
\begin{align}\label{eq:l_tot}
    \mathcal{L}_\rho\para{\vect{u}, \vect{v}, \vect{x}, \vect{y}} = \mathcal{L}_m(\vect{u},\vect{v}) + \frac{1}{\rho}\,\mathcal{L}_t(\vect{u},\vect{x},\vect{y}) \, ,
\end{align}
where
\begin{align}\label{eq:l_m_and_l_t}
    \mathcal{L}_m\para{\vect{u}, \vect{v}} := \frac{1}{2\Delta_m} \norm{\rdmmat{Y}_m - \frac{1}{\sqrt{N_1}} \vect{u}\otimes \vect{v}}_F^2 \, , \qquad \mathcal{L}_{t}\para{\vect{u}, \vect{x}, \vect{y}} := \frac{1}{2\Delta_t} \norm{\rdmtens{Y}_t - \frac{1}{N_1} \vect{u}\otimes\vect{x} \otimes \vect{y}}_F^2  \, .
\end{align}

Here $\rho$ is added to give the possibility for more weights to the tensor or matrix part, but the ML loss, strictly speaking, is for $\rho=1$. Spherical GD with learning rate $\eta$, spherical constraint $\vect{\mu}^t \in \R^4$, initialization $\{\vect{w}^0_k\}_{k=1}^4$  is the  iteration
\begin{align}\label{eq:gd_def}
     \vect{\hat{w}}^{t+1}_k =   \para{1+\mu_k^t}\vect{\hat{w}}^{t}_k - \eta \nabla_{k}  \mathcal{L}_\rho \para{\vect{\hat{w}}^{t}_1 , \vect{\hat{w}}^{t}_2, \vect{\hat{w}}^{t}_3, \vect{\hat{w}}^{t}_4}, \quad  \vect{\hat{w}}^{t=0}_k =  \vect{w}^{0}_k \qquad \mbox{for }  k  \in \llbracket 4 \rrbracket ,
\end{align}

Analyzing the performance of gradient descent algorithms in contexts with rough landscapes—such as those involving spiked tensor models studied in this paper—remains challenging. This analysis requires tools from Dynamical Mean-Field Theory (DMFT), which are often difficult to apply due to the intricate nature of the associated optimization dynamics \cite{sarao_mannelli_who_2019}.   To overcome these analytical challenges, rather than directly analyzing gradient descent, we instead study the family of Maximum Likelihood Approximate Message Passing algorithms defined in Sec.~\ref{sec:gen}. The motivation for this approach—and the introduction of these AMP variants—is that they are specifically designed to share the same fixed points as gradient descent:
\begin{myprop}\label{prop:equiv_GD_AMP_FP}  The fixed points of \emph{ML-AMP} are stationary points of the Gradient Descent with loss $\mathcal{L}$ defined by the iteration of Eq.~ \ref{eq:gd_def}, where each argument is constrained on the sphere of radius $\sqrt{N_k}$.
\end{myprop}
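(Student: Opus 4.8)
The plan is to check, directly at a fixed point and block by block, the two conditions that together characterise a constrained stationary point of $\mathcal{L}_\rho$ on the product of spheres $\prod_{k}\{\vect{w}_k : \norm{\vect{w}_k}=\sqrt{N_k}\}$ — which is exactly what a fixed point of the spherical gradient descent~\eqref{eq:gd_def} is. Indeed, setting $\hat{\vect{w}}^{t+1}_k=\hat{\vect{w}}^{t}_k=:\hat{\vect{w}}_k$ in~\eqref{eq:gd_def} gives $\eta\,\nabla_k\mathcal{L}_\rho(\hat{\vect{w}})=\mu_k^\star\,\hat{\vect{w}}_k$ together with $\norm{\hat{\vect{w}}_k}=\sqrt{N_k}$: the tangential component of the Euclidean gradient must vanish, and the multiplier $\mu_k^\star$ absorbs the radial part. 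So it suffices to show that an ML-AMP fixed point (a) lies on the product of spheres and (b) has $\nabla_k\mathcal{L}_\rho$ collinear with $\hat{\vect{w}}_k$ for every $k\in\llbracket 4\rrbracket$.

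For (a), the fixed-point equation $\hat{\vect{w}}_k=\vect{\eta}_{\mathrm{ML}}(A_k,\vect{b}_k)=\sqrt{N_k}\,\vect{b}_k/\norm{\vect{b}_k}$ from~\eqref{eq:denoising_ml} immediately forces $\norm{\hat{\vect{w}}_k}=\sqrt{N_k}$. Moreover, since $\vect{\eta}_{\mathrm{ML}}$ discards its first argument, the only content of this equation is that $\vect{b}_k$ and $\hat{\vect{w}}_k$ are positively proportional: $\vect{b}_k=c_k\,\hat{\vect{w}}_k$ with $c_k=\norm{\vect{b}_k}/\sqrt{N_k}>0$ (the orientation is correct because $\langle\vect{b}_k,\hat{\vect{w}}_k\rangle=c_k N_k>0$). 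Dropping time indices in the $\vect{b}_k$-update of Def.~\ref{def:AMP} and using $\langle\vect{w}_m,\vect{w}_m\rangle=N_m$ inside the Onsager coefficient $r_k$, this becomes, for each $k$,
\[
\vect{h}_k \;:=\; \frac{\ind_{k\in\setim}}{\Delta_m\sqrt{N_1}}\,\rdmmat{Y}_m\,\hat{\vect{w}}_{3-k}\;+\;\frac{\ind_{k\in\setit}}{\rho\Delta_t N_1}\,\rdmtens{Y}_t\cdot\Big(\bigotimes_{l\in\setit\setminus k}\hat{\vect{w}}_l\Big)\;=\;(c_k+r_k^\star)\,\hat{\vect{w}}_k ,
\]
where $r_k^\star$ denotes $r_k$ evaluated at the fixed point.

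For (b), I would compute $\nabla_k\mathcal{L}_\rho$ directly from~\eqref{eq:l_tot}--\eqref{eq:l_m_and_l_t} by expanding the Frobenius norms. The part of $\nabla_k\mathcal{L}_\rho(\hat{\vect{w}})$ that is linear in $\rdmmat{Y}_m,\rdmtens{Y}_t$ is exactly $-\vect{h}_k$, while the quadratic part is radial and equals $\big(\tfrac{\ind_{k\in\setim}}{\Delta_m N_1}\norm{\hat{\vect{w}}_{3-k}}^2+\tfrac{\ind_{k\in\setit}}{\rho\Delta_t N_1^2}\prod_{l\in\setit\setminus k}\norm{\hat{\vect{w}}_l}^2\big)\hat{\vect{w}}_k$, which is precisely $A_k^\star\,\hat{\vect{w}}_k$ — the coefficient $A_k$ of Def.~\ref{def:AMP} evaluated at the fixed point (this matching of $A_k$ with the quadratic/``self-energy'' term is by design). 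Hence $\nabla_k\mathcal{L}_\rho(\hat{\vect{w}})=-\vect{h}_k+A_k^\star\,\hat{\vect{w}}_k=(A_k^\star-c_k-r_k^\star)\,\hat{\vect{w}}_k$, the desired collinearity, with multiplier $\mu_k^\star=\eta(A_k^\star-c_k-r_k^\star)$. For the matrix blocks $k\in\setim$ this is ordinary matrix calculus; for the tensor blocks $k\in\setit$ it is the same computation written with the contraction notation of Section~\ref{sec:gen}; and for $\vect{u}$ (which appears in both $\mathcal{L}_m$ and $\mathcal{L}_t$) the two contributions add exactly as they do in $\vect{b}_1$ and in $A_1$.

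I do not expect a genuine obstacle: the argument is essentially a bookkeeping match between the AMP fixed-point equations and the gradient. The only slightly delicate point is to recognise that the AMP self-energy term $A_k\hat{\vect{w}}_k$ reproduces the quadratic part of $\nabla_k\mathcal{L}_\rho$ and that the Onsager term $r_k\hat{\vect{w}}_k$, together with $c_k\hat{\vect{w}}_k$, is purely radial, so that everything except $-\vect{h}_k$ collapses onto the $\hat{\vect{w}}_k$ direction. The non-separability of the ML denoiser and its divergence replacement only affect $\sigma_k$, hence $r_k^\star$, which is just one more scalar absorbed into $\mu_k^\star$ and never enters the collinearity conclusion. (The statement is one-directional; a converse — that every constrained stationary point arises as an ML-AMP fixed point — would additionally require inverting the relations for $c_k$ and $r_k^\star$, and is not claimed.)
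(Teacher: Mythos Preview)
Your proof is correct and follows essentially the same route as the paper's: write the ML-AMP fixed-point relation as $(c_k+r_k)\hat{\vect{w}}_k=\vect{h}_k$ with $\vect{h}_k$ the data-dependent contraction term, write the spherical-GD stationarity as $\mu_k\vect{w}_k\propto\vect{h}_k$, and observe both live on the sphere. The only difference is that you carry the quadratic ``self-energy'' contribution $A_k^\star\hat{\vect{w}}_k$ of $\nabla_k\mathcal{L}_\rho$ explicitly before absorbing it into the multiplier, whereas the paper absorbs it silently into its $\mu_k$; this is a presentational refinement, not a different argument.
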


\begin{proof} 

A given fixed point $\hat{\vect{w}}_{0}$ of the ML-AMP algorithm reads
\begin{align}
    \para{\frac{1}{\sqrt{N}}\norm{\vect{b}_k} + r_k} \hat{\vect{w}}_{k,0}  = \frac{\mathds{1}_{k \in \setim }}{\Delta_m \sqrt{N_1}} \; \rdmmat{Y}_m \hat{\vect{w}}_{3-k,0} \; + \; \frac{\mathds{1}_{k \in \setit }}{\rho\Delta_t N_1} \; \rdmtens{Y}_t \cdot \Big(\bigotimes_{l \in \setit \setminus k} \hat{\vect{w}}_{l,0} \Big) \, ,
\end{align}
which is also the stationary case of Eq.\ref{eq:gd_def}. More precisely, the stationary points of GD algorithms can be written as
\begin{align}
    \mu_k \vect{w}_k = \frac{\mathds{1}_{k \in \setim }}{\Delta_m \sqrt{N_1}} \; \rdmmat{Y}_m \vect{w}_{3-k} \; + \;  \frac{\mathds{1}_{k \in \setit }}{\rho \Delta_t N_1} \; \rdmtens{Y}_t \cdot \Big(\bigotimes_{l \in \setit \setminus k} \vect{w}_{l} \Big) \, .
\end{align}
 ML-AMP and GD, by construction, both preserve the spherical constraint at every time iteration.
\end{proof}

We derive state evolution (SE) equations to study the GD fixed points in the \emph{simultaneous} case through the \emph{ML-AMP} setting given by Eq.~\ref{eq:state_evolution_linearized}. Let's introduce the constants
\begin{align}\label{eq:alpha_tilde}
    \tilde{\alpha}_2 :=  \alpha_2 \; \frac{\frac{\alpha_2}{\Delta_m}}{\frac{\alpha_2}{\Delta_m}+\frac{\alpha_3 \alpha_4}{\rho^2\Delta_t}},  \quad \tilde{\delta}_c := \delta^{\mathrm{Bayes}}_c(\tilde{\alpha}_2, \alpha_3,\alpha_4,\Delta_m) = \sqrt{\alpha_3 \alpha_4} \frac{\tilde{\alpha}_2 - \Delta_m^2}{ \tilde{\alpha}_2 + \Delta_m } \, ,
\end{align}
and define 
\begin{align} \label{eq:qu_mat_ml}
\tilde{q}_{m,1} 
= 
\para{\frac{\tilde{\alpha}_2 - \Delta_m^2}{\tilde{\alpha}_2+\Delta_m}}^{1/2} \; ,
\hspace{1cm} 
\tilde{q}_{m,2} 
= 
\para{\frac{1}{\tilde{\alpha}_2}\frac{\tilde{\alpha}_2- \Delta_m^2}{1+\Delta_m}}^{1/2} \; .
\end{align}
We consider the new vector of parameters $\vect{\theta}=(\rho, \alpha_2,\alpha_3, \alpha_4, \Delta_m, \Delta_t)$. Similarly, we introduce the sets no recovery $\tilde{\mathcal{R}}_0 := \{ \vect{\theta} \in (\mathbb{R}_+^\star)^6 | \Delta_m > \sqrt{\tilde{\alpha}_2}\}$, matrix recovery $\tilde{\mathcal{R}}_m := \{ \vect{\theta} \in \para{\mathbb{R}_+^\star}^6  |  \Delta_m < \sqrt{\tilde{\alpha}_2} \,\Delta_t > \tilde{\delta}_{c} \} $, and matrix and tensor recovery $\tilde{\mathcal{R}}_t := \{  \vect{\theta} \in \para{\mathbb{R}_+^\star}^6  | \Delta_t < \tilde{\delta}_c \}$, corresponding to the previous sets $\mathcal{R}_0$, $\mathcal{R}_m$, $\mathcal{R}_t$ with $\alpha_2$ and $\delta_c^{\mathrm{Bayes}}$ replaced by $\tilde{\alpha}_2$ and $\tilde{\delta}_c$ respectively. Our next result characterizes the limit of this simultaneous setting in the following way: 

\begin{mytheo}\label{th:tot_fp_ml}
For the SE equation of Eq.~\ref{eq:state_evolution_linearized} associated to the \emph{ML-AMP} algorithm Theorem \ref{prop:phase_se_bo} holds with $\mathcal{R}_0$, $\mathcal{R}_m$, $\mathcal{R}_t$ replaced respectively by $\tilde{\mathcal{R}}_0$, $\tilde{\mathcal{R}}_m$, $\tilde{\mathcal{R}}_t$ and $q_{m,1},q_{m,2}$ replaced respectively by $\tilde{q}_{m,1}$ and $\tilde{q}_{m,2}$.
\end{mytheo}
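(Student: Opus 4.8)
\myparagraph{Proof strategy ---} The plan is to establish the three items of Theorem~\ref{prop:phase_se_bo} in turn. The key observation is that the ML state evolution $F:=\vect f_{\mathrm{ML}}\circ\vect g$ of Eq.~\eqref{eq:state_evolution_linearized} leaves the \emph{matrix sector} $\mathcal S:=\{\vect q\in[0,1]^4 : q_3=q_4=0\}$ invariant, and that on $\mathcal S$, after the change of variables $p_k:=q_k^2$, it reduces exactly to the Bayes state evolution of Eq.~\eqref{eq:state_evolution_BO} with $\alpha_2$ replaced by $\tilde\alpha_2$. Concretely, on $\mathcal S$ the tensor contributions to $g_1$ vanish, so the $(q_1,q_2)$-updates only involve $f_1(b)=b/\sqrt{g_{1,\rho^2}(1)+b^2}$ and $f_2(b)=b/\sqrt{g_{2,\rho^2}(1)+b^2}$; the normalization $g_{1,\rho^2}(1)$ appearing in $f_1$ equals $\alpha_2^2/(\Delta_m\tilde\alpha_2)$, which is exactly the definition~\eqref{eq:alpha_tilde} of $\tilde\alpha_2$ rearranged (equivalently $\alpha_2/\tilde\alpha_2=1+\Delta_m\alpha_3\alpha_4/(\alpha_2\rho^2\Delta_t)$). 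Squaring the two equations and setting $p_k=q_k^2$ turns them into the two-variable system of Appendix~\ref{sec:app_sys} with $a=\tilde\alpha_2$, $b=1$, $\Delta=\Delta_m$. This immediately yields item~(ii): that system has a unique nonzero solution, whose coordinates are the Bayes values of Eq.~\eqref{eq:qu_mat_bo} with $\alpha_2\to\tilde\alpha_2$; taking square roots gives the fixed point $\tilde{\vect q}_m:=(\tilde q_{m,1},\tilde q_{m,2},0,0)^T$ with $\tilde q_{m,1},\tilde q_{m,2}$ as in Eq.~\eqref{eq:qu_mat_ml}, and it is the unique fixed point of this form.

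For the stability statements I would linearize $F$. Since $f_k'(0)=g_{k,\rho^2}(1)^{-1/2}$, and since $g_2$ does not depend on $q_3,q_4$ while the $q_3,q_4$-derivatives of $g_1$ vanish when $q_3=q_4=0$, the Jacobian of $F$ at any point of $\mathcal S$ is block triangular for the split $\{1,2\}\cup\{3,4\}$ (the off-diagonal block $\partial(F_1,F_2)/\partial(q_3,q_4)$ vanishes there), so its spectrum is the union of the spectra of the two diagonal $2\times2$ blocks. At $\vect 0$ the tensor block vanishes, while the matrix block is off-diagonal with spectral radius $\tilde\alpha_2^{1/4}/\Delta_m^{1/2}$; hence $\vect 0$ is stable iff $\Delta_m>\sqrt{\tilde\alpha_2}$, i.e.\ $\vect\theta\in\tilde{\mathcal R}_0$, which is item~(i). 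At $\tilde{\vect q}_m$ the matrix block is conjugate — via the diffeomorphism $p=q^2$, valid near $\tilde q_{m,k}>0$ — to the Jacobian at the fixed point of the Bayes matrix dynamics with $\alpha_2\to\tilde\alpha_2$, hence has spectral radius $<1$ whenever that fixed point exists (Appendix~\ref{subsec:app_matrix_fp}); the tensor block is off-diagonal with entries $\sqrt{\alpha_4}\,\tilde q_{m,1}/\sqrt{\Delta_t}$ and $\sqrt{\alpha_3}\,\tilde q_{m,1}/\sqrt{\Delta_t}$, hence spectral radius $(\alpha_3\alpha_4)^{1/4}\tilde q_{m,1}/\sqrt{\Delta_t}$, which — using $\tilde q_{m,1}^2=(\tilde\alpha_2-\Delta_m^2)/(\tilde\alpha_2+\Delta_m)$ — is $<1$ iff $\Delta_t>\sqrt{\alpha_3\alpha_4}\,(\tilde\alpha_2-\Delta_m^2)/(\tilde\alpha_2+\Delta_m)=\tilde\delta_c$. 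Hence $\tilde{\vect q}_m$ is stable iff $\Delta_m<\sqrt{\tilde\alpha_2}$ and $\Delta_t>\tilde\delta_c$, i.e.\ $\vect\theta\in\tilde{\mathcal R}_m$, completing item~(ii).

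For item~(iii) the square-root conjugation breaks down, since once the tensor is active the cross term in $g_1(\vect q)^2$ is not a function of the $p_k$ alone; I would instead transpose verbatim the free-energy argument used in the proof of Theorem~\ref{prop:phase_se_bo}(iii). The ML state-evolution fixed points are the stationary points of the replica-symmetric free energy attached to the loss $\mathcal L_\rho$ (the ML counterpart of the Bayes-optimal free energy of Appendix~\ref{sec:free_e}); this free energy is continuous on the compact set $[0,1]^4$ and therefore has a global minimizer, which is a stable fixed point. In $\tilde{\mathcal R}_t$ one has $\Delta_t<\tilde\delta_c$, which forces $\Delta_m<\sqrt{\tilde\alpha_2}$ (otherwise $\tilde\delta_c\le0<\Delta_t$), so by items~(i)–(ii) neither $\vect 0$ nor $\tilde{\vect q}_m$ can be that minimizer. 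A short case analysis on the zero pattern of a fixed point shows these are the only fixed points with a vanishing coordinate: if $q_1\neq0$ then $q_2=f_2(g_2(\vect q))\neq0$, and if $q_1=0$ then $g_3=g_4=0$ forces $q_3=q_4=0$, and the remaining patterns reduce to these. Therefore the global minimizer is a fixed point $\vect q_t$ with all coordinates nonzero, and the same case analysis shows that in $\tilde{\mathcal R}_t$ there is no other stable fixed point having a zero coordinate.

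The main obstacle is item~(iii): on the matrix sector the iteration is effectively two-dimensional and its fixed points and stability can be read off explicit $2\times2$ blocks, whereas the full four-dimensional iteration is genuinely nonlinear, so one must rely on the variational characterization. The delicate point is to set up the ML free energy correctly and to argue that its global minimizer is an \emph{attracting} fixed point of the ML state evolution rather than merely a critical point — an argument parallel to, but not identical with, the Nishimori-based reasoning available in the Bayesian case. The remaining ingredients — the algebraic identity defining $\tilde\alpha_2$, the two Jacobian spectral-radius computations, and the case analysis on zero patterns — are routine.
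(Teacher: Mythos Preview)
Your treatment of items~(i) and~(ii) is essentially the paper's own: the paper also linearizes the SE map around $\vect 0$ and around the matrix fixed point, observes the block structure of the Jacobian on $\mathcal S=\{q_3=q_4=0\}$, and reads off the stability thresholds $\Delta_m=\sqrt{\tilde\alpha_2}$ and $\Delta_t=\tilde\delta_c$ from the two $2\times2$ blocks (App.~\ref{sec:app_stab}), using App.~\ref{sec:app_sys} for the fixed-point expressions. Your observation that the change of variables $p_k=q_k^2$ conjugates the ML dynamics restricted to $\mathcal S$ with the Bayes matrix dynamics at effective ratio $\tilde\alpha_2$ is a clean way to import the matrix-block stability directly from the Bayesian analysis; the paper instead just recomputes the ML eigenvalues from scratch. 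Both routes give the same answer, and yours makes the role of $\tilde\alpha_2$ conceptually transparent.

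For item~(iii) you correctly identify the weak point, and it is a genuine one. The paper's proof of Theorem~\ref{prop:phase_se_bo}(iii) relies on the Bayes-optimal free energy of App.~\ref{sec:free_e}: its global minimizer on $[0,1]^4$ is a critical point, hence an SE fixed point, and one argues it must have all nonzero coordinates because $\vect 0$ and $\vect q_m$ are unstable in $\mathcal R_t$. For Theorem~\ref{th:tot_fp_ml} the paper simply defers to App.~\ref{sec:app_stab}, which only establishes (i) and (ii); no ML analogue of the free energy is ever written down, and no argument is given that an ML potential exists whose global minimizer is an \emph{attracting} fixed point of the ML-SE. Your proposal to ``transpose verbatim'' the variational argument inherits exactly this gap: the ML-SE map $\vect f_{\mathrm{ML}}\circ\vect g$ is not obviously a gradient iteration, and absent a Nishimori-type identity there is no automatic correspondence between local minima of a putative potential and linearly stable fixed points. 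Your case analysis on zero patterns is correct and does dispose of the ``no other stable fixed point with a zero coordinate'' clause; what remains open, both in your sketch and in the paper, is the \emph{existence} of a stable all-nonzero fixed point throughout $\tilde{\mathcal R}_t$. A local bifurcation argument at $\Delta_t=\tilde\delta_c$ would give this near the boundary but not globally.
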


\begin{proof} 
    The proof of this result is deferred to App.~\ref{sec:app_stab}.
\end{proof}
At first glance, the observation that all thresholds remain of order $\mathcal{O}(1)$ highlights the \emph{staircase} phenomenon, and the possible recovery of the tensor, despite the non-Bayesian approach. However, a closer inspection is revealing a less than ideal situation: In fact, the ERM is yielding always a worse phase diagram, and attempting to learn the tensor makes it harder to learn the matrix! 

From the expression of Eq.~\ref{eq:alpha_tilde} for $\tilde{\alpha}_2$, it is clear that $\tilde{\alpha}_2 \leq \alpha_2$, and similarly, one finds that $\tilde{\delta}_c \leq \delta^{\mathrm{Bayes}}_c$. As a consequence—by definition of the corresponding regions—it follows that $\tilde{\mathcal{R}}_0 \supseteq \mathcal{R}_0$ and $\tilde{\mathcal{R}}_t \subseteq \mathcal{R}_t$. In other words, the detection threshold for the matrix component (and for the joint matrix-tensor component) is always higher—i.e., detection is harder—than in the Bayesian setting discussed in the previous section. This is illustrated in Fig.~\ref{fig:no_hard_ml}, which shows the phase diagram with the different regions projected onto the $(\Delta_m, \Delta_t)$ and $(\alpha_2, \alpha_3)$ planes.

This effect stems from the fact that the matrix and tensor losses are respectively scaled by $1/\Delta_m$ and $1/\Delta_t$ within the total loss $\mathcal{L}$. As $\Delta_t$ decreases, the tensor loss $\mathcal{L}_t$ gains more weight, effectively roughening the loss landscape and hindering optimization. To mitigate this effect, one can try to adjust the parameter $\rho$. Since $\tilde{\alpha}_2$ is a decreasing function of $\rho$, increasing $\rho$ reduces the relative weight of the tensor loss in the total loss, which improves the matrix recovery threshold, but it remains always worse than the Bayesian one for any values of $\rho$. 

This problem thus gives an example where the ML weak recovery is  worse than the Bayesian one. Here the joint optimization over both the tensor and the matrix  significantly impairs recovery performance, as the attempt at the fitting of the tensor degrades the matrix reconstruction quality. Despite the tensor inherently providing richer information, simultaneous learning introduces complex interdependencies that shift the weak recovery threshold, rendering matrix estimation—a typically straightforward task—substantially more challenging. This was avoided by the Bayes AMP algorithm, which avoids overfitting and proceeds cautiously due to its Bayesian nature. 

\begin{figure}[t]
    \centering
    \begin{subfigure}{\textwidth}
        \centering
        \includegraphics[width=0.48\textwidth]{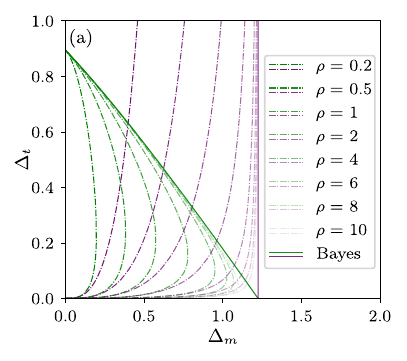}
        \hspace{0.01cm}
        \includegraphics[width=0.48\textwidth]{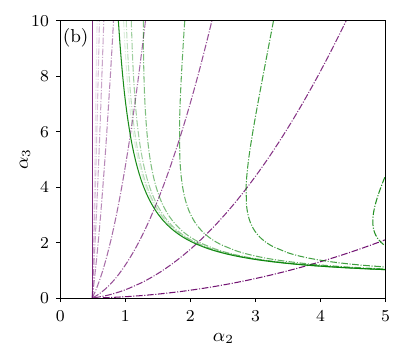}
    \end{subfigure}
    \caption{Phase diagrams for ML-AMP algorithm in $\para{\Delta_m, \Delta_t}$ space (a) with size ratios $\alpha_2 = 1.5$, $\alpha_3 = 0.8$, and $\alpha_4 = 1$ and in $\para{\alpha_2, \alpha_3}$ space (b) with noise $\Delta_m = 0.7$, $\Delta_t=0.8$ and a size ratio $\alpha_4=1$. The purple and green dashed-dotted lines indicate the phase transitions between the regions $\tilde{\mathcal{R}}_0$ and $\tilde{\mathcal{R}}_m$, and between $\tilde{\mathcal{R}}_m$ and $\tilde{\mathcal{R}}_t$, respectively, for several values of the parameter $\rho$. These transitions are implicitly defined by $\Delta_m = \sqrt{\tilde{\alpha}_2}$ and $\Delta_t = \tilde{\delta}_c$. Note that for any value of $\rho$ the transition are worst with respect to the Bayesian algorithm.}
    \label{fig:no_hard_ml}
\end{figure}

\section{Optimal transitions with matrix and contracted tensor PCA}\label{sec:spec}

To address this limitation, one could try to preserve the matrix recovery through a \emph{sequential GD optimization} where one first recovers the matrix alone, preserving the original recovery BBP threshold. Then, the resulting estimator will be used to help the tensor optimization. Actually, this corresponds to the limit $\rho \to \infty$ in the previous approach, which indeed recovers the Bayesian optimal results.

Interestingly, this can be implemented directly, without gradient descent,  
by two successive spectral algorithms where one performs PCA on the matrix, plugs the estimate of ${\bf u}$ in the tensor, and applies PCA on the tensor to estimate ${\bf x}$ and ${\bf y}$. While this could be directly studied by AMP through its equivalence to gradient descent Proposition \ref{prop:equiv_GD_AMP_FP}, it can be directly studied by random matrix theory as a PCA phenomenon. This leads to the following result: first, the optimal Bayesian thresholds are recovered, and thus the sequential approach yields optimal thresholds. Secondly, one can characterize the performance. Define 
\begin{align}
    q^{\mathrm{s}}_{1}
    = 
    \para{\frac{\alpha_2 - \Delta_m^2}{\alpha_2+\Delta_m}}^{1/2} \; , &\hspace{1cm} 
    q^{\mathrm{s}}_{2} 
    = 
    \para{\frac{1}{\alpha_2}\frac{\alpha_2- \Delta_m^2}{1+\Delta_m}}^{1/2} \,  , 
    \label{eq:overlaps_12_seq} \\
    q^{\mathrm{s}}_{3} 
    = 
    \para{\frac{1}{\alpha_3 (q^{\mathrm{s}}_{1})^2} \frac{\alpha_3 \alpha_4 (q^{\mathrm{s}}_{1})^4 - \Delta_t^2}{\alpha_4 (q^{\mathrm{s}}_{1})^2 + \Delta_t}}^{1/2} \;
     ,&  \hspace{1cm}
    q^{\mathrm{s}}_4
    = 
    \para{\frac{1}{\alpha_4 (q^{\mathrm{s}}_{1})^2} \frac{\alpha_3 \alpha_4 (q^{\mathrm{s}}_{1})^4 - \Delta_t^2}{\alpha_3 (q^{\mathrm{s}}_{1})^2 + \Delta_t}}^{1/2} \, . \label{eq:overlaps_34_seq}
\end{align}
Then we have:

\begin{mytheo} \label{th:svd} Let $\rdmvect{\hat{u}}^{SVD}$, $\rdmvect{\hat{v}}^{SVD}$ the top (unit-norm) singular vectors of $\rdmmat{Y}_m/\sqrt{N_2}$ ; and $\rdmvect{\hat{x}}^{SVD}$, $\rdmvect{\hat{y}}^{SVD}$ the top (unit-norm) singular vectors of the matrix $( \rdmtens{Y_t} \cdot \rdmvect{\hat{u}}^{SVD})/\sqrt{N_4}$, then  we have
\begin{align}
    (i) \qquad  |\langle \sqrt{N_1} \, \rdmvect{\hat{u}}^{SVD} , \vect{u}^\star \rangle|/N_1 \to q^{\mathrm{s}}_1 , \qquad 
    |\langle \sqrt{N_2} \, \rdmvect{\hat{v}}^{SVD} , \vect{v}^\star \rangle|/N_2 \to q^{\mathrm{s}}_2 , \\
    (ii) \qquad |\langle \sqrt{N_3} \, \rdmvect{\hat{x}}^{SVD} , \vect{x}^\star \rangle|/N_3 \to q^{\mathrm{s}}_3 , \qquad
    |\langle \sqrt{N_4} \, \rdmvect{\hat{y}}^{SVD} , \vect{y}^\star \rangle|/N_4 \to q^{\mathrm{s}}_4, 
\end{align}
where $q^{\mathrm{s}}_1, q^{\mathrm{s}}_2,q^{\mathrm{s}}_3, q^{\mathrm{s}}_4$ are defined in Eq.~\ref{eq:overlaps_12_seq}~and~\ref{eq:overlaps_34_seq}. 
\end{mytheo}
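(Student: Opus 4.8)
The plan is to read both parts of Theorem~\ref{th:svd} as instances of the classical BBP phase transition for a rank-one additive perturbation of a rectangular Gaussian matrix (see e.g.\ \cite{baik_phase_2005} and the rectangular analogue of Benaych-Georges--Nadakuditi), applied \emph{twice}: once directly to the matrix channel to obtain $(i)$, and once to the tensor channel \emph{after conditioning on the first-stage estimate} $\hat{\vect u}^{\mathrm{SVD}}$, exploiting crucially that the tensor noise $\rdmtens Z$ is independent of everything in the matrix channel.

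For $(i)$, I would write $\vect u^\star = \sqrt{N_1}\,\bar{\vect u}$ and $\vect v^\star = \sqrt{N_2}\,\bar{\vect v}$ with $\bar{\vect u},\bar{\vect v}$ uniform on the unit spheres and independent of $\rdmmat Z$, and note that rescaling does not change singular vectors, so
\[
\frac{\rdmmat Y_m}{\sqrt{\Delta_m N_2}} \;=\; \frac{\rdmmat Z}{\sqrt{N_2}} \;+\; \frac{1}{\sqrt{\Delta_m}}\,\bar{\vect u}\,\bar{\vect v}^{T}
\]
is a rank-one spike of strength $\theta = 1/\sqrt{\Delta_m}$ on top of an $N_1\times N_2$ matrix with i.i.d.\ $\mathsf N(0,1/N_2)$ entries of aspect ratio $N_1/N_2\to 1/\alpha_2$ (if $\alpha_2<1$ one transposes, rescaling the noise variance so that its denominator matches the number of columns; the resulting formulas are unchanged). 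Then BBP gives: the top singular value detaches from the bulk iff $\theta$ exceeds the fourth root of the aspect ratio, which rearranges exactly to $\Delta_m<\sqrt{\alpha_2}$; and above threshold the squared overlaps of the top left/right unit singular vectors with $\bar{\vect u},\bar{\vect v}$ converge to $\tfrac{\theta^4-c}{\theta^2(\theta^2+c)}$ and $\tfrac{\theta^4-c}{\theta^2(\theta^2+1)}$ with $c = 1/\alpha_2$. Substituting $\theta^2 = 1/\Delta_m$ and simplifying reproduces $q_{m,1}=(q^{\mathrm s}_1)^2$ and $q_{m,2}=(q^{\mathrm s}_2)^2$ of Eq.~\eqref{eq:qu_mat_bo}; since $|\langle\sqrt{N_k}\,\hat{\vect w}^{\mathrm{SVD}}_k,\vect w^\star_k\rangle|/N_k = |\langle\hat{\vect w}^{\mathrm{SVD}}_k,\bar{\vect w}_k\rangle|$, taking square roots yields $(i)$, the overlaps vanishing below threshold (consistent with reading $q^{\mathrm s}_1,q^{\mathrm s}_2$ as $0$ there).

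For $(ii)$, I would condition on $(\rdmmat Y_m,\vect u^\star,\vect x^\star,\vect y^\star)$. Since $\rdmtens Z$ is independent of all of these, $\hat{\vect u}^{\mathrm{SVD}}$ becomes deterministic and
\[
\tilde{\rdmmat Z} \;:=\; \rdmtens Y_t\cdot\hat{\vect u}^{\mathrm{SVD}} \;-\; \tfrac{1}{N_1}\langle \hat{\vect u}^{\mathrm{SVD}},\vect u^\star\rangle\,\vect x^\star\otimes\vect y^\star \;=\; \sqrt{\Delta_t}\,\big(\rdmtens Z\cdot\hat{\vect u}^{\mathrm{SVD}}\big)
\]
is, entrywise, a sum of Gaussians with total variance $\|\hat{\vect u}^{\mathrm{SVD}}\|^2=1$, and entries indexed by distinct pairs $(j,k)$ use disjoint noise variables, so $\tilde{\rdmmat Z}$ is \emph{exactly} an $N_3\times N_4$ matrix with i.i.d.\ $\mathsf N(0,1)$ entries, independent of the spike direction $\bar{\vect x}\,\bar{\vect y}^{T}$ and of the now-deterministic scalar $\langle \hat{\vect u}^{\mathrm{SVD}},\vect u^\star\rangle$. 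Hence $\para{\rdmtens Y_t\cdot\hat{\vect u}^{\mathrm{SVD}}}/\sqrt{N_4}$, after factoring out $\sqrt{\Delta_t}$, is a spiked rectangular Gaussian matrix with i.i.d.\ $\mathsf N(0,1/N_4)$ noise of aspect ratio $N_3/N_4\to\alpha_3/\alpha_4$ and spike strength
\[
\theta_N \;=\; \langle \hat{\vect u}^{\mathrm{SVD}},\bar{\vect u}\rangle\,\sqrt{\tfrac{N_3}{\Delta_t N_1}} \;\longrightarrow\; q^{\mathrm s}_1\sqrt{\tfrac{\alpha_3}{\Delta_t}} \;=:\; \theta ,
\]
the limit coming from $(i)$. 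Applying BBP again — in the version allowing a random, $N$-dependent spike strength converging to $\theta$, which follows from monotonicity and continuity of the BBP limits in $\theta$ by a sandwiching argument — the detachment condition $\theta^2>\sqrt{\alpha_3/\alpha_4}$ rearranges to $(q^{\mathrm s}_1)^4\alpha_3\alpha_4>\Delta_t^2$, i.e.\ $\Delta_t<\sqrt{\alpha_3\alpha_4}\,(q^{\mathrm s}_1)^2 = \delta^{\mathrm{Bayes}}_c$, so the sequential scheme is optimal; and above threshold the squared overlaps of the top left/right singular vectors with $\bar{\vect x},\bar{\vect y}$ converge to $\tfrac{\theta^4-c}{\theta^2(\theta^2+c)}$ and $\tfrac{\theta^4-c}{\theta^2(\theta^2+1)}$ with $c=\alpha_3/\alpha_4$ and $\theta^2 = (q^{\mathrm s}_1)^2\alpha_3/\Delta_t$, which a direct simplification turns into $(q^{\mathrm s}_3)^2$ and $(q^{\mathrm s}_4)^2$ of Eq.~\eqref{eq:overlaps_34_seq}. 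Averaging over the conditioning (the conclusion depends on it only through $\theta_N\to\theta$) and taking square roots gives $(ii)$; the case $\alpha_3>\alpha_4$ is handled by transposition with the attendant noise rescaling and yields the same formulas, and the sign ambiguity of the singular vectors is harmless since flipping the sign of $\theta_N$ changes no singular value nor any $|\langle\cdot,\cdot\rangle|$.

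The part I expect to require the most care is structural rather than conceptual: keeping the noise-variance normalization correct when transposing across aspect ratio $1$, and in $(ii)$ upgrading the textbook BBP statement (fixed deterministic spike) to a random spike strength $\theta_N\to\theta$. It is precisely here that the independence of $\rdmtens Z$ from the matrix channel does the work: it decouples $\theta_N$ from the realization of $\tilde{\rdmmat Z}$, making the conditioning legitimate and the passage to the limit routine. Everything else is the algebra verifying that the two BBP overlap pairs, evaluated at $\theta^2 = 1/\Delta_m$ and $\theta^2 = (q^{\mathrm s}_1)^2\alpha_3/\Delta_t$, coincide with Eqs.~\eqref{eq:overlaps_12_seq}--\eqref{eq:overlaps_34_seq}.
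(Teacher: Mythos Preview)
Your proposal is correct and takes essentially the same approach as the paper: both apply the rectangular BBP transition twice, first to $\rdmmat Y_m$ directly and then to the contracted tensor $\rdmtens Y_t \cdot \hat{\vect u}^{\mathrm{SVD}}$, using that independence of $\rdmtens Z$ from the matrix channel makes the contracted noise conditionally i.i.d.\ Gaussian with variance $\Delta_t$. Your write-up is in fact more explicit than the paper's about the random spike-strength $\theta_N\to\theta$ issue in step $(ii)$ and about normalization when crossing aspect ratio one; the only cosmetic slip is that $\tilde{\rdmmat Z}$ as you defined it has entrywise variance $\Delta_t$ rather than $1$, but you correctly factor out $\sqrt{\Delta_t}$ immediately after, so the computation is unaffected.
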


\begin{proof}
    We recall the following result regarding spiked matrix:

\begin{adjustwidth}{2em}{0pt}
    {\begin{mylemma}[BBP phase-transition] \label{lemma:bbp}Let $\rdmmat{X}$ a $(n \times m)$ matrix with independent elements, with mean zero and variance $\Delta$ and let $\mat{P} = \sigma_n \vect{u}\vect{v}^T$ a rank-one matrix with $\vect{u},\vect{v}$ of unit norm and independent of  $\rdmmat{X}$, and the non-trivial singular value satisfying $\sigma_n \to \sigma$ for some constant $\sigma$, then as $n \to \infty$ with $n/d \to \alpha$, we have  
    \begin{align}
    \Big|\Big\langle \rdmvect{u}_1 \Big( \rdmmat{X}/\sqrt{m} + \mat{P} \Big) , \vect{u}  \Big \rangle \Big|^2 
    &\xrightarrow[n \to \infty]{\mathrm{a.s.}} 
    1 - \frac{\alpha (1 + \sigma^2/\Delta)}{\sigma^2/\Delta(\sigma^2/\Delta + \alpha)}
    \\
    \Big|\Big\langle \rdmvect{v}_1 \Big( \rdmmat{X}/\sqrt{m} + \mat{P} \Big) , \vect{v}  \Big \rangle \Big|^2
    &\xrightarrow[n \to \infty]{\mathrm{a.s.}} 
    1 - \frac{(\alpha + \sigma^2/\Delta)}{\sigma^2/\Delta(\sigma^2/\Delta + 1)}
    \end{align}
    where $\rdmvect{u}_1$ and $\rdmvect{v}_1$ denote the top singular vectors of the corresponding matrix, normalized to have norm one. 
    \end{mylemma} 
    }
\end{adjustwidth}
Thus we have:

(i) For the spiked (asymmetric) rectangular matrices given by Eq.~\ref{eq:uts_setting_m}, applying Lemma \ref{lemma:bbp} with $\Delta = \Delta_m$, $\vect{u} = \vect{u}^\star/\|\vect{u}^\star\|$, $\vect{v} = \vect{v}^\star/\|\vect{v}^\star\|$ yields the desired result of the theorem for $k=1,2$. 

(ii) Next, we decompose for the contracted tensor as
\begin{align}
    (\rdmtens{Y_t} \cdot \rdmvect{\hat{u}})/\sqrt{N_4} = \rdmmat{\Tilde{X}}/\sqrt{N_4} + \mat{\Tilde{P}} \, ,
\end{align}
where we use the notations $\rdmvect{\hat{u}} \equiv \rdmvect{u}_1 \para{ \rdmmat{Y}_m / \sqrt{N_2} }$ and where the noise matrix $\rdmmat{\Tilde{X}}$ has elements $\sqrt{\Delta_t} (\rdmtens{Z}_t \cdot \rdmvect{\hat{u}})_{jk} = \sqrt{\Delta_t} \sum_{a} \hat{u}_a Z_{ajk}$ which are independent (conditioned on $\rdmvect{\hat{u}}$) with mean zero and variance $\Delta_t$ and  the signal part is given by the rank-one matrix $\mat{\Tilde{P}} =  \tilde{\sigma}_n\; 
\vect{x}^\star (\vect{y}^\star)^T / \para{\|\vect{x}^\star \| \|\vect{y}^\star \|}
$ with $\tilde{\sigma}_n := \langle \rdmvect{\hat{u}}, \vect{u}^\star/\sqrt{N_1} \rangle \; \| \vect{x}^\star \| \| \vect{y}^\star \| / \sqrt{N_1 N_4}  $ which by the law of large numbers and the previous part $(i)$ converges to $\tilde{\sigma}_n \to q_1^2 \sqrt{\alpha_3 \alpha_4}$, giving the remaining result for $k=3,4$ by Lem.~\ref{lemma:bbp}.
\end{proof}

We illustrate these asymptotic results with finite size predictions in the Fig.~\ref{fig:spectral}. 
This highlights the advantage of sequential learning over joint optimization. A simple spectral method—applying PCA first on the matrix, followed by the tensor—effectively decouples the estimation, avoiding the pitfalls observed in simultaneous learning.

\begin{figure}[t]
    \centering
    \includegraphics[width=0.8\textwidth]{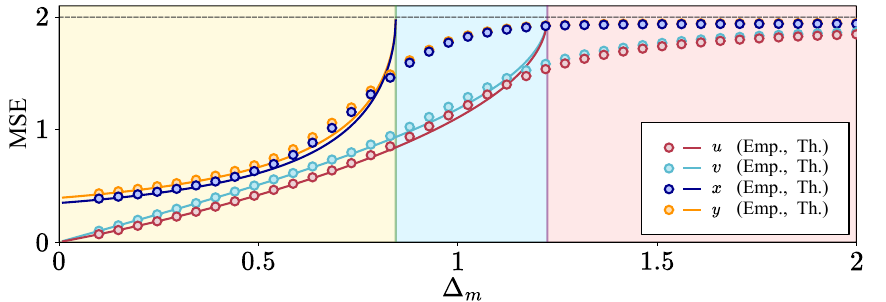}
    \caption{Empirical and theoretical MSE vs. $\Delta_m$ with parameters $\Delta_t=0.3,\; N_1=10^3, \; N_2 = 1.5\,.\,10^3,\; N_3 = 0.8\,.\,10^3,\;N_4 = 10^3$. The empirical values (dots) are computed numerically via the the sequential spectral method, averaged over 500 samples. The error bars are not visible on the plots due to their small magnitude. Their theoretical predictions (solid lines) are given by the overlaps $(q^{\mathrm{s}}_1, q^{\mathrm{s}}_2, q^{\mathrm{s}}_3, q^{\mathrm{s}}_4)$ defined in Eq.~\ref{eq:overlaps_12_seq}~and~\ref{eq:overlaps_34_seq}. The background colors match the three regions $\mathcal{R}_0$, $\mathcal{R}_m$, and $\mathcal{R}_t$. The purple and green vertical lines indicate the phase transitions between the regions $\mathcal{R}_0$ and $\mathcal{R}_m$, and between $\mathcal{R}_m$ and $\mathcal{R}_t$, respectively. These transitions are implicitly defined by $\Delta_m = \sqrt{\alpha_2}$ and $\Delta_t = \delta_c^{\mathrm{Bayes}}$.
    }
    \label{fig:spectral}
\end{figure}

\section{Conclusion}

We studied a non-symmetric matrix-tensor model with a shared signal using several complementary tools. We first derived a Bayesian Approximate Message Passing algorithm and analyzed its performance through its associated state evolution equations. This analysis revealed three distinct recovery regions: no recovery, matrix-only recovery, and joint matrix-tensor recovery. We computed explicit thresholds separating these regimes and identified a staircase phenomenon, where the matrix acts as a stepping stone for tensor recovery.

We then turned to a gradient descent approach on the maximum likelihood loss, using AMP as a proxy to analyze its performance. The same phase structure appears, but recovery becomes more difficult: when the tensor dominates the loss, its higher complexity degrades the recovery of all components. This effect can be reduced by decreasing its influence, but the resulting performance remains strictly weaker than that of the Bayes-initialized AMP.

These results motivate a simple sequential strategy: first recover the matrix, then use the estimate to infer the tensor. This approach reduces to two spectral methods and allows one to recover the Bayesian thresholds, despite lower overlaps. It highlights the importance of exploiting the structure of the model and using a form of curriculum in the learning process—estimating easier components first to guide the rest.

\section*{Acknowledgment} 
We acknowledge funding from the Swiss National Science Foundation grants SNFS  SMArtNet (grant number 212049), OperaGOST (grant number 200021 200390) and DSGIANGO (grant number 225837).

\bibliography{bib_converted_from_cleaned}

\newpage 

\appendix

\section{Detailed expressions of the algorithms}\label{sec:app_algo}

In this appendix, we provide the explicit iterative forms of the three AMP algorithms analyzed in the main text: Bayes-AMP, ML-AMP, and Sequential ML-AMP. While Definition~\ref{def:AMP} presents a unified formulation that covers all cases, the goal here is to write each instance in detail. The notations used are consistent with those introduced earlier, except that we replace the generic estimator $\hat{\vect{w}}_k$ with signal-specific symbols—for example, $\hat{\vect{w}}_1$ is now written as $\hat{\vect{u}}$, and similarly for the other components. This change is purely notational and aims to enhance readability when tracking updates for each variable separately.

\subsection{Bayes Approximate Message Passing}\label{subsec:app_bayes_AMP}

In a Bayesian setting, the algorithm has full access to the generative model, including the prior distributions over the latent variables. This setting allows for principled inference by computing the posterior distribution. Thus, a Bayesian algorithm aims at maximizing the posterior of the model which takes the following form:
\begin{align}\label{eq:post}
    P(\vect{u}, \vect{v}, \vect{x}, \vect{y} | \rdmmat{Y}_m, \rdmtens{Y}_t) = \frac{e^{-\mathcal{L}_{\rho=1}(\vect{u}, \vect{v}, \vect{x}, \vect{y})-\frac{1}{2}(\norm{\vect{u}}^2 +\norm{\vect{v}}^2+\norm{\vect{x}}^2 +\norm{\vect{y}}^2)}}{\mathcal{Z}(\rdmmat{Y}_m, \rdmtens{Y}_t)}\, ,
\end{align}
where $\mathcal{L}_{\rho=1} = \frac{1}{2\Delta_m} \norm{\rdmmat{Y}_m - \frac{1}{\sqrt{N_1}} \vect{u}\otimes \vect{v}}_F^2 + \frac{1}{2\Delta_t} \norm{\rdmtens{Y}_t - \frac{1}{N_1} \vect{u}\otimes\vect{x} \otimes \vect{y}}_F^2$ is the negative log likelihood defined in the main text. The denominator $\mathcal{Z}$ is the normalization constant, or partition function. 

To make the structure of dependencies explicit, we represent this posterior as a factor graph. Each node corresponds to a latent variable, and each factor encodes either a prior or an observation-induced interaction. The figure below shows a simplified version. Blank squares indicate that additional couplings exist (notably due to the fully connected nature of the matrix and tensor observations), but are omitted for the sake of clarity:
\begin{figure}[ht]
\centering
\begin{tikzpicture}[scale=0.8, transform shape, every node/.style={font=\normalsize}]

    \centering
    \tikzset{var/.style={ellipse, draw, minimum width=2.5cm, minimum height=1.5cm}}
    \tikzset{factor_prior/.style={rectangle, draw, minimum size=0.5cm}}-
    \tikzset{factor_observe/.style={rectangle, draw, minimum size=1cm}}-
    \tikzset{factor_blank/.style={rectangle, draw, minimum size=0.5cm}}-
    \tikzset{main/.style={circle, draw, minimum size=1cm}}

    \node[main] (v1) at (-6,0) {$v_{1}$};
    \node[main] (u1) at (-2,0) {$u_{1}$};
    \node[main] (x1) at (2,0) {$x_{1}$};
    \node[main] (y1) at (6,0) {$y_{1}$};

    \node[main] (vj) at (-6,-3) {$v_{j}$};
    \node[main] (ui) at (-2,-3) {$u_{i}$};
    \node[main] (xj) at (2,-3) {$x_{j}$};
    \node[main] (yk) at (6,-3) {$y_{k}$};

    \node[main] (vn) at (-6,-6) {$v_{N_2}$};
    \node[main] (un) at (-2,-6) {$u_{N_1}$};
    \node[main] (xn) at (2,-6) {$x_{N_3}$};
    \node[main] (yn) at (6,-6) {$y_{N_4}$};

    \node at (-6, -1) {$\vdots$};
    \node at (-2, -1) {$\vdots$};
    \node at (6, -1) {$\vdots$};

    \node at (-6, -4) {$\vdots$};
    \node at (-2, -4) {$\vdots$};
    \node at (6, -4) {$\vdots$};

    \node[factor_observe] (Yij) at (-4, -4.1) {$Y_{m,\,i j}$};
    \node[factor_blank] (Y1j) at (-4, -1.1) {};
    \node[factor_observe] (Tijk) at (2, -4.1) {$Y_{t,\, i j k}$};
    \node[factor_blank] (Tn1k) at (2, -1.1) {};

    \node[factor_prior] (Pv1) at (-6,1) {$P_v$};
    \node[factor_prior] (Pu1) at (-2,1) {$P_u$};
    \node[factor_prior] (Px1) at (2,1) {$P_x$};
    \node[factor_prior] (Py1) at (6,1) {$P_y$};
    \node[factor_prior] (Pvj) at (-6,-2) {$P_v$};
    \node[factor_prior] (Pui) at (-2,-2) {$P_u$};
    \node[factor_prior] (Pxj) at (2,-2) {$P_x$};
    \node[factor_prior] (Pyk) at (6,-2) {$P_y$};
    \node[factor_prior] (Pvn) at (-6,-5) {$P_v$};
    \node[factor_prior] (Pun) at (-2,-5) {$P_u$};
    \node[factor_prior] (Pxn) at (2,-5) {$P_x$};
    \node[factor_prior] (Pyn) at (6,-5) {$P_y$};

    \draw (ui) -- (Yij);
    \draw (vj) -- (Yij);
    \draw (u1) -- (Y1j);
    \draw (vj) -- (Y1j);
    \draw (ui) -- (Tijk);
    \draw (xj) -- (Tijk);
    \draw (yk) -- (Tijk);
    \draw (un) -- (Tn1k);
    \draw (x1) -- (Tn1k);
    \draw (yk) -- (Tn1k);

    \draw (v1) -- (Pv1);
    \draw (u1) -- (Pu1);
    \draw (x1) -- (Px1);
    \draw (y1) -- (Py1);
    \draw (vj) -- (Pvj);
    \draw (ui) -- (Pui);
    \draw (xj) -- (Pxj);
    \draw (yk) -- (Pyk);
    \draw (vn) -- (Pvn);
    \draw (un) -- (Pun);
    \draw (xn) -- (Pxn);
    \draw (yn) -- (Pyn);
\end{tikzpicture}
\caption{Factor graph of the posterior distribution. The displayed couplings are not exhaustive; blank factors illustrate other interactions not shown here, beyond the main ones linking $u_i$, $v_j$, $x_j$, and $y_k$.}
\label{fig:factor_graph}
\end{figure}
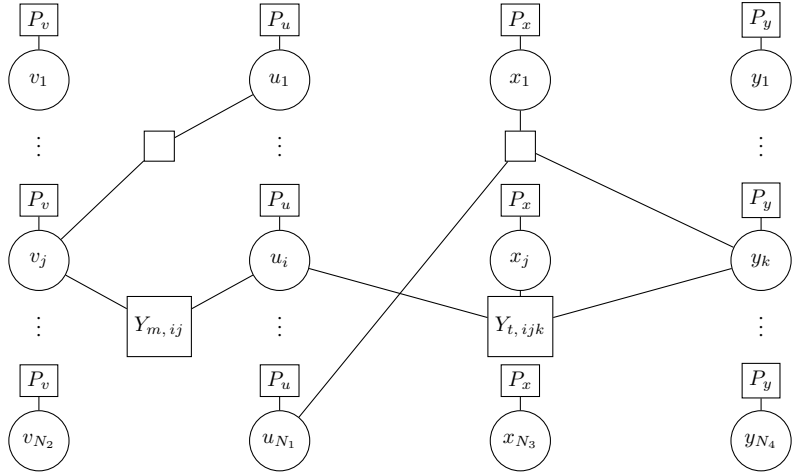

From the factor graph, one can formally write the Belief Propagation (BP) equations, which iteratively approximate the marginals of the posterior distribution. For clarity, we denote the four latent variables as $w_1 = u$, $w_2 = v$, $w_3 = x$, and $w_4 = y$. We introduce the messages related to the estimator $w_k$: $m_{k, i \to ij}$ and $\tilde{m}_{k, ij \to i}$ for the matrix part (when available), and $t_{k, i \to ijk}$ and $\tilde{t}_{k, ijk \to i}$ for the tensor part (when available), which encode the information exchanged between variable nodes and observation factors.

We now give the BP update rules for the variable $u$; the updates for $v$, $x$, and $y$ follow the same pattern.
\begin{align}
    &m_{1, i \to i j } (u_i) = \frac{1}{Z_{1, i \to i j }} \prod_{ \substack{1 \leq j' \leq N_2 \\ j' \neq j}} \tilde{m}_{1, ij' \to i} (u_i) \prod_{\substack{ 1\leq j' \leq N_3 \\ 1\leq k' \leq N_4}} \tilde{t}_{1, i j' k' \to i} (u_i) \label{eq:BP_1} \\[5pt]
    &\tilde{m}_{1, i j  \to i} = \frac{1}{Z_{1, i j  \to i}} \int dv_j  \; m_{2, j \to i j }(v_j) e^{-\frac{1}{2\Delta_m} \norm{\rdmmat{Y}_m - \frac{1}{\sqrt{N_1}} \vect{u}\otimes \vect{v}}_F^2} \label{eq:BP_2} \\[5pt]
    &t_{1, i \to i j k} (u_i) = \frac{1}{Z_{1, i \to i j k}} \prod_{1 \leq  j' \leq N_2 } \tilde{m}_{1,i j' \to i} (u_i) \prod_{\substack{ 1\leq j' \leq N_3 \\ 1\leq k' \leq N_4 \\ j' \neq j ,\; k' \neq k}} \tilde{t}_{1, i j' k' \to i} (u_i) \label{eq:BP_3} \\[5pt]
    &\tilde{t}_{1, i j k \to i } = \frac{1}{Z_{1, i j k \to i}} \int dx_j dy_k  \; t_{3, j \to i j k }(x_j) t_{4, k \to i j k }(y_k)  e^{-\frac{1}{2\Delta_t} \norm{\rdmtens{Y}_t - \frac{1}{N_1} \vect{u}\otimes\vect{x} \otimes \vect{y}}_F^2} \label{eq:BP_4}
\end{align}

BP equations of this form are generally intractable to compute exactly. However, under suitable approximations, these equations can be simplified into the relaxed BP framework. From there, Approximate Message Passing (AMP) can be derived by computing Onsager reaction terms and leveraging statistical properties such as self-averaging and Bayesian setting. The derivation closely follows the procedure established for the spiked matrix model in \cite{Lesieur_2017}, although the computations are heavier in our matrix-tensor setting. Still, the recipe remains the same.

We now present the final form of the Bayes-AMP updates:
\begin{align}
\hspace{0pt}
\begin{array}{rl}
\vcenter{\hbox{for $\hat{\vect{u}}$}} &
\left\{
\begin{aligned}
r_1^t &= \frac{\alpha_2}{\Delta_m} \sigma^t_2 + \frac{\alpha_3 \alpha_4}{\Delta_t} \, \para{ \frac{1}{N_3} \langle  \hat{\vect{x}}^t, \hat{\vect{x}}^{t-1} \rangle\sigma^t_4 + \frac{1}{N_4}\langle  \hat{\vect{y}}^t, \hat{\vect{y}}^{t-1} \rangle \sigma^t_3} \, , 
\\[5pt]
\vect{b}^t_1 &= \frac{1}{\Delta_m\sqrt{N_1}} \;\rdmmat{Y}_m \; \hat{\vect{v}}^t  \;
+ \frac{1}{\Delta_t N_1}\;\rdmtens{Y}_{t} \,\cdot\, \para{\hat{\vect{x}}^t \otimes \hat{\vect{y}}^t} \, - \, r_1^t \hat{\vect{u}}^{t-1} \, ,
\\[5pt]
A^t_1 &= \frac{1}{\Delta_m N_1} \norm{\hat{\vect{v}}^t}^2 + \frac{1}{\Delta_t N_1^2} \norm{\hat{\vect{x}}^t}^2\, \norm{\hat{\vect{y}}^t}^2  \, , 
\\[5pt]
\hat{\vect{u}}^{t+1} &= \frac{\vect{b}^t_1}{1+A_1^t} \, , 
\\[5pt]
\sigma_1^{t+1} &= \frac{1}{1+A_1^t} \, .
\end{aligned}
\right.
\end{array}
\end{align}
\begin{align}
\hspace{-115pt} 
\begin{array}{rl}
\vcenter{\hbox{for $\hat{\vect{v}}$}} &
\left\{
\begin{aligned}
r_2^t &= \frac{1}{\Delta_m} \sigma^t_1 \, , 
\\[5pt]
\vect{b}^t_2 &= \frac{1}{\Delta_m\sqrt{N_1}} \;\rdmmat{Y}_m \; \hat{\vect{u}}^t \, - \, r_2^t\, \hat{\vect{v}}^{t-1} \, ,
\\[5pt]
A^t_2 &= \frac{1}{\Delta_m N_1} \norm{\hat{\vect{u}}^t}^2  \, , 
\\[5pt]
\hat{\vect{v}}^{t+1} &= \frac{\vect{b}^t_2}{1+A_2^t} \, , 
\\[5pt]
\sigma_2^{t+1} &= \frac{1}{1+A_2^t} \, .
\end{aligned}
\right.
\end{array}
\end{align}
\begin{align}
\hspace{-50pt} 
\begin{array}{rl}
\vcenter{\hbox{for $\hat{\vect{x}}$}} &
\left\{
\begin{aligned}
r_3^t &= \frac{\alpha_4}{\Delta_t} \, \para{ \frac{1}{N_1} \langle  \hat{\vect{u}}^t, \hat{\vect{u}}^{t-1} \rangle\sigma^t_4 + \frac{1}{N_4}\langle  \hat{\vect{y}}^t, \hat{\vect{y}}^{t-1} \rangle \sigma^t_1} \, , 
\\[5pt]
\vect{b}^t_3 &= \frac{1}{\Delta_t N_1}\;\rdmtens{Y}_{t} \,\cdot\, \para{\hat{\vect{u}}^t \otimes \hat{\vect{y}}^t} \, - \, r_3^t\, \hat{\vect{x}}^{t-1} \, ,
\\[5pt]
A^t_3 &= \frac{1}{\Delta_t N_1^2} \norm{\hat{\vect{u}}^t}^2 \, \norm{\hat{\vect{y}}^t}^2  \, ,
\\[5pt]
\hat{\vect{x}}^{t+1} &= \frac{\vect{b}^t_3}{1+A_3^t} \, ,
\\[5pt]
\sigma_3^{t+1} &= \frac{1}{1+A_3^t} \, .
\end{aligned}
\right.
\end{array}
\end{align}
\begin{align}
\hspace{-50pt} 
\begin{array}{rl}
\vcenter{\hbox{for $\hat{\vect{y}}$}} &
\left\{
\begin{aligned}
r_4^t &= \frac{\alpha_3}{\Delta_t} \, \para{ \frac{1}{N_1} \langle  \hat{\vect{u}}^t, \hat{\vect{u}}^{t-1} \rangle\sigma^t_3 + \frac{1}{N_3}\langle  \hat{\vect{x}}^t, \hat{\vect{x}}^{t-1} \rangle \sigma^t_1} \, , 
\\[5pt]
\vect{b}^t_4 &= \frac{1}{\Delta_t N_1}\;\rdmtens{Y}_{t} \,\cdot\, \para{\hat{\vect{u}}^t \otimes \hat{\vect{x}}^t} \, - \, r_4^t\, \hat{\vect{y}}^{t-1} \, ,
\\[5pt]
A^t_4 &= \frac{1}{\Delta_t N_1^2} \norm{\hat{\vect{u}}^t}^2 \, \norm{\hat{\vect{x}}^t}^2  \, ,
\\[5pt]
\hat{\vect{y}}^{t+1} &= \frac{\vect{b}^t_4}{1+A_4^t} \, ,
\\[5pt]
\sigma_4^{t+1} &= \frac{1}{1+A_4^t} \, .
\end{aligned}
\right.
\end{array}
\end{align}

From this algorithm, one can derive a low-dimensional time-recursive equation to predict the performance of Bayes-AMP. Namely, the Bayes state evolution (Bayes-SE) can be rigorously derived following \cite{javanmard_state_2013}, and leads to the generic form stated in Eq.~\ref{eq:SE}. Nevertheless, we explicitly derive the Bayes-SE in this appendix in order to obtain the specific form given in Eq.~\ref{eq:state_evolution_BO}. 

In general, one can write:
\begin{align}
    q_1^{t+1} = \frac{1}{N_1}\langle \hat{\vect{u}}^{t+1}, \vect{u}^\star \rangle = \frac{1}{N_1}\langle \vect{\eta}\para{A_1^t, \vect{b}_1^t}, \vect{u}^\star \rangle \, .
\end{align}
However, in the Bayesian setting one can remember that $q^t_k=\norm{\hat{\vect{w}}^t_k}^2/N_k$ (or equivalently, $\vect{q}^t=\vect{s}^t$ with the variables introduced in Eq.~\ref{eq:SE}), it follows:
\begin{align}
\begin{cases}
    b^t_{1,i} = \mathbb{E}\left[ b^t_{1,i} \right] + \sqrt{\operatorname{Var}\para{b^t_{1,i}}}\,G_i = [\vect{g}(\vect{q^t})]_1 u_i^\star + \sqrt{[\vect{g}(\vect{q^t})]_1} \, G_i \, , \\[5pt]
    A^t_1 = \mathbb{E}\left[ A^t_1 \right] + \sqrt{\operatorname{Var}\para{A_1^t}}\, G_i = [\vect{g}(\vect{q^t})]_1 \, , 
\end{cases}
\end{align}
with the gaussian random vector $G_i \overset{\text{i.i.d.}}{\sim} \mathcal{N}(0,1)$, and we recall $[\vect{g}(\vect{q^t})]_1 = \frac{\alpha_2}{\Delta_m}q_2^t + \frac{\alpha_3\alpha_4}{\Delta_t}q_3^tq_4^t$. Thus, one can write the closed-form update:
\begin{align}
    q_1^{t+1} = \mathbb{E}_{u_i^\star, G_i}\left[ \eta\para{[\vect{g}(\vect{q^t})]_1, \;[\vect{g}(\vect{q^t})]_1 u_i^\star + \sqrt{[\vect{g}(\vect{q^t})]_1} \, G_i} u_i^\star\right] = \frac{[\vect{g}(\vect{q^t})]_1}{1+[\vect{g}(\vect{q^t})]_1} \, .
\end{align}
Repeating this procedure for the remaining variables yields the full Bayes-SE system:
\begin{align}\label{eq:app_bo_se}
\hspace{-117pt} 
    \begin{cases}
        q_1^{t+1} = \frac{\frac{\alpha_2}{\Delta_m}q_2^t +  \frac{\alpha_3\alpha_4}{\Delta_t}q_3^t q_4^t}{1 + \frac{\alpha_2}{\Delta_m}q_2^t +  \frac{\alpha_3\alpha_4}{\Delta_t}q_3^t q_4^t} \, ,
        \\[10pt]
        q_2^{t+1} = \frac{\frac{1}{\Delta_m}q_1^t}{1 + \frac{1}{\Delta_m}q_2^t} \, ,
        \\[10pt]
        q_3^{t+1} = \frac{\frac{\alpha_4}{\Delta_t}q_1^t q_4^t}{1 + \frac{\alpha_4}{\Delta_t}q_1^t q_4^t} \, ,
        \\[10pt]
        q_4^{t+1} = \frac{\frac{\alpha_3}{\Delta_t}q_1^t q_3^t}{1 + \frac{\alpha_3}{\Delta_t}q_1^t q_3^t} \, .
    \end{cases}
\end{align}
The Bayes-SE equations derived above serve as the main analytical tool for studying the fixed points and asymptotic performance of the Bayes-AMP algorithm. Their properties are further analyzed in the following sections, including the characterization of phase transitions and recovery regions in the Bayes-AMP framework.

\subsection{Maximum Likelihood Approximate Message Passing}\label{subsec:app_joint_ML_AMP}

In more realistic scenarios where the prior is unknown, the Bayes-optimal framework is not applicable. A natural alternative is to minimize a loss function—in our case, the negative-log-likelihood. It is from this objective that the ML-AMP algorithm is derived by adapting the denoising function. We present below its explicit update equations.
\begin{align}
\hspace{0pt} 
\begin{array}{rl}
\vcenter{\hbox{for $\hat{\vect{u}}$}} &
\left\{
\begin{aligned}
r_1^t &= \frac{\alpha_2}{\Delta_m} \sigma^t_2 + \frac{\alpha_3 \alpha_4}{\rho \Delta_t} \, \para{ \frac{1}{N_3} \langle  \hat{\vect{x}}^t, \hat{\vect{x}}^{t-1} \rangle\sigma^t_4 + \frac{1}{N_4}\langle  \hat{\vect{y}}^t, \hat{\vect{y}}^{t-1} \rangle \sigma^t_3} \, , 
\\[5pt]
\vect{b}^t_1 &= \frac{1}{\Delta_m\sqrt{N_1}} \;\rdmmat{Y}_m \; \hat{\vect{v}}^t  \;
+ \frac{1}{\rho \Delta_t N_1}\;\rdmtens{Y}_{t} \,\cdot\, \para{\hat{\vect{x}}^t \otimes \hat{\vect{y}}^t} \, - \, r_1^t\, \hat{\vect{u}}^{t-1} \, ,
\\[5pt]
\hat{\vect{u}}^{t+1} &= \sqrt{N_1}\frac{\vect{b}^t_1}{\norm{\vect{b}^t_1}} \, , 
\\[5pt]
\sigma_1^{t+1} &= \sqrt{N_1}\frac{1}{\norm{\vect{b}^t_1}} \, .
\end{aligned}
\right.
\end{array}
\end{align}
\begin{align}
\hspace{-115pt} 
\begin{array}{rl}
\vcenter{\hbox{for $\hat{\vect{v}}$}} &
\left\{
\begin{aligned}
r_2^t &= \frac{1}{\Delta_m} \sigma^t_1 \, , 
\\[5pt]
\vect{b}^t_2 &= \frac{1}{\Delta_m\sqrt{N_1}} \;\rdmmat{Y}_m^T \; \hat{\vect{u}}^t \, - \, r_2^t\, \hat{\vect{v}}^{t-1} \, , 
\\[5pt]
\hat{\vect{v}}^{t+1} &= \sqrt{N_2}\frac{\vect{b}^t_2}{\norm{\vect{b}^t_2}} \, , 
\\[5pt]
\sigma_2^{t+1} &= \sqrt{N_2}\frac{1}{\norm{\vect{b}^t_2}} \, .
\end{aligned}
\right.
\end{array}
\end{align}
\begin{align}
\hspace{-45pt} 
\begin{array}{rl}
\vcenter{\hbox{for $\hat{\vect{x}}$}} &
\left\{
\begin{aligned}
r_3^t &= \frac{\alpha_4}{\rho \Delta_t} \, \para{ \frac{1}{N_1} \langle  \hat{\vect{u}}^t, \hat{\vect{u}}^{t-1} \rangle\sigma^t_4 + \frac{1}{N_4}\langle  \hat{\vect{y}}^t, \hat{\vect{y}}^{t-1} \rangle \sigma^t_1} \, , 
\\[5pt]
\vect{b}^t_3 &= \frac{1}{\rho \Delta_t N_1}\;\rdmtens{Y}_{t} \,\cdot\, \para{\hat{\vect{u}}^t \otimes \hat{\vect{y}}^t} \, - \, r_3^t\, \hat{\vect{x}}^{t-1} \, ,
\\[5pt]
\hat{\vect{x}}^{t+1} &= \sqrt{N_3}\frac{\vect{b}^t_3}{\norm{\vect{b}^t_3}} \, , 
\\[5pt]
\sigma_3^{t+1} &= \sqrt{N_3}\frac{1}{\norm{\vect{b}^t_3}} \, .
\end{aligned}
\right.
\end{array}
\end{align}
\begin{align}
\hspace{-45pt} 
\begin{array}{rl}
\vcenter{\hbox{for $\hat{\vect{y}}$}} &
\left\{
\begin{aligned}
r_4^t &= \frac{\alpha_3}{\rho \Delta_t} \, \para{ \frac{1}{N_1} \langle  \hat{\vect{u}}^t, \hat{\vect{u}}^{t-1} \rangle\sigma^t_3 + \frac{1}{N_3}\langle  \hat{\vect{x}}^t, \hat{\vect{x}}^{t-1} \rangle \sigma^t_1} \, , 
\\[5pt]
\vect{b}^t_4 &= \frac{1}{\rho \Delta_t N_1}\;\rdmtens{Y}_{t} \,\cdot\, \para{\hat{\vect{u}}^t \otimes \hat{\vect{x}}^t} \, - \, r_4^t\, \hat{\vect{y}}^{t-1} \, ,
\\[5pt]
\hat{\vect{y}}^{t+1} &= \sqrt{N_4}\frac{\vect{b}^t_4}{\norm{\vect{b}^t_4}} \, .
\\[5pt]
\sigma_4^{t+1} &= \sqrt{N_4}\frac{1}{\norm{\vect{b}^t_4}} \, .
\end{aligned}
\right.
\end{array}
\end{align}

Similarly to the Bayesian case, one can derive a low-dimensional time-recursive equation to predict the performance of ML-AMP that we call ML-SE equations. The rigorous derivation of the generic SE equations follows from \cite{javanmard_state_2013}. However, one can look for a specific derivation of the ML-SE below leading to Eq.~\ref{eq:state_evolution_linearized}. 

In general, one can write:
\begin{align}
    q_1^{t+1} = \frac{1}{N_1}\langle \hat{\vect{u}}^{t+1}, \vect{u}^\star \rangle = \frac{1}{N_1}\langle \vect{\eta}_{\mathrm{ML}}\para{\vect{b}_1^t}, \vect{u}^\star \rangle \, .
\end{align}
However, for the ML case, one can remember the spherical constraint $\norm{\hat{\vect{w}}^t_k}^2/N_k =1$ (or equivalently, $\vect{s}^t=\vect{1}$ with the notations introduced in Eq.~\ref{eq:SE}), it follows:
\begin{align}
    b^t_{1,i} = \mathbb{E}\left[ b^t_{1,i} \right] + \sqrt{\operatorname{Var}\para{b^t_{1,i}}}\,G_i = [\vect{g}_\rho(\vect{q^t})]_1 u_i^\star + \sqrt{[\vect{g}_{\rho^2}(\vect{q^t})]_1} \, G_i \, ,
\end{align}
with the gaussian random vector $G_i \overset{\text{i.i.d.}}{\sim} \mathcal{N}(0,1)$, and we recall $[\vect{g}_\rho(\vect{q^t})]_1 = \frac{\alpha_2}{\Delta_m}q_2^t + \frac{\alpha_3\alpha_4}{\rho \Delta_t}q_3^tq_4^t$. Thus, one can write the closed-form update:
\begin{align}
    q_1^{t+1} = \mathbb{E}_{u_i^\star, G_i}\left[ \eta_{\mathrm{ML}}\para{[\vect{g}_\rho(\vect{q^t})]_1 u_i^\star + \sqrt{[\vect{g}_{\rho^2}(\vect{1})]_1} \, G_i} u_i^\star\right] = \frac{[\vect{g}_\rho(\vect{q^t})]_1}{\sqrt{[\vect{g}_{\rho^2}(\vect{1})]_1+[\vect{g}_\rho(\vect{q^t})]^2_1}} \, .
\end{align}
Repeating this procedure for the remaining variables yields the full ML-SE system:
\begin{align}\label{eq:app_ml_se}
\hspace{-61pt}
    \begin{cases}
        q_1^{t+1} = \frac{\frac{\alpha_2}{\Delta_m}q_2^t +  \frac{\alpha_3\alpha_4}{\rho \Delta_t}q_3^t q_4^t}{\sqrt{\frac{\alpha_2}{\Delta_m} +  \frac{\alpha_3\alpha_4}{\rho^2 \Delta_t} + \para{\frac{\alpha_2}{\Delta_m}q_2^t +  \frac{\alpha_3\alpha_4}{\rho \Delta_t}q_3^t q_4^t}^2}} \, ,
        \\[10pt]
        q_2^{t+1} = \frac{\frac{1}{\Delta_m}q_1^t}{\sqrt{\frac{1}{\Delta_m} + \para{\frac{1}{\Delta_m}q_2^t}^2}} \, ,
        \\[10pt]
        q_3^{t+1} = \frac{\frac{\alpha_4}{\rho \Delta_t}q_1^t q_4^t}{\sqrt{\frac{\alpha_4}{\rho^2 \Delta_t} + \para{\frac{\alpha_4}{\rho \Delta_t}q_1^t q_4^t}^2}} \, ,
        \\[10pt]
        q_4^{t+1} = \frac{\frac{\alpha_3}{\rho \Delta_t}q_1^t q_3^t}{\sqrt{\frac{\alpha_3}{\rho^2 \Delta_t} + \para{\frac{\alpha_3}{\rho \Delta_t}q_1^t q_3^t}^2}} \, .
    \end{cases}
\end{align}
As in the Bayesian setting, this state evolution will serve as the foundation of our analysis. One can observe that the coefficient $\rho$ cancels out in the update equations for $q_3$ and $q_4$. As a result, the only component where the competition between matrix and tensor estimation persists is in the update of $q_1$, associated with the shared signal $\vect{u}$. Notably, even when the tensor is not recovered (i.e., $q_3 = q_4 = 0$), the update of $q_1$ still includes a contribution from the tensor noise. This coupling is precisely what deteriorates the recovery thresholds compared to the matrix-only case. 

\section{Decimating sequential GD to matrix spiked problem}\label{sec:app_equiv_sequential}

In Sec.~\ref{sec:spec}, we analyze a sequential algorithm. In a first time, it tries to optimize the loss $\mathcal{L}_m$ and infer the matricial signals $\vect{u}^\star$ and $\vect{v}^\star$ alone. Then, it uses the estimator of the shared components that we will note $\vect{u}^\textrm{mat}$ in this section and keeps it fixed to infer the tensor components $\vect{x}^\star$ and $\vect{y}^\star$ by minimizing the tensor loss $\mathcal{L}_t$. One can show that the optimization of the second loss reduces to the analysis of a matrix spiked problem. Indeed, it comes:
\begin{align}
    \norm{\rdmtens{Y}_t-\frac{1}{N_1}\vect{u}^{\textrm{mat}} \otimes \vect{x} \otimes \vect{y}}^2_F = \norm{\rdmtens{Y}_t}^2_F - \frac{1}{N_1}\norm{\rdmtens{Y}_t \cdot \vect{u}^{\textrm{mat}}}^2_F + \alpha_4\norm{\frac{1}{\sqrt{N_4}}\rdmtens{Y}_t \cdot \vect{u}^{\textrm{mat}}-\frac{1}{\sqrt{N_4}}\vect{x} \otimes \vect{y}}^2_F \, .
\end{align}
Therefore, optimizing the loss $\mathcal{L}_t$ is equivalent to optimizing this loss $\norm{\frac{1}{\sqrt{N_4}}\rdmtens{Y}_t \cdot \vect{u}^{\textrm{mat}}-\frac{1}{\sqrt{N_4}} \vect{x} \otimes \vect{y}}^2_F$. The latter being an effective matrix spiked problem, it reduces to the optimization of a matrix problem, which is the core of Sec.~\ref{sec:spec}.

This result naturally arises by replacing $\hat{\vect{u}}^t$ with $\vect{u}^{\textrm{mat}}$ in the update equations for $\hat{\vect{x}}^t$ and $\hat{\vect{y}}^t$ in the ML-AMP algorithm defined in Sec.~\ref{subsec:app_joint_ML_AMP} and by setting $\sigma_1^t=\sigma^{\textrm{mat}}_1=0$. The obtained AMP decouples the update of $\hat{\vect{u}}^t$ and $\hat{\vect{v}}^t$ from the updates of $\hat{\vect{x}}^t$ and $\hat{\vect{y}}^t$, and corresponds to a sequential ML-AMP defined separately on the loss $\mathcal{L}_m$ and the loss $\norm{\frac{1}{\sqrt{N_4}}\rdmtens{Y}_t \cdot \vect{u}^{\textrm{mat}}-\frac{1}{\sqrt{N_4}} \vect{x} \otimes \vect{y}}^2_F$.

This result follows naturally by replacing $\hat{\vect{u}}^t$ with $\vect{u}^{\textrm{mat}}$ in the update equations for $\hat{\vect{x}}^t$ and $\hat{\vect{y}}^t$ in the ML-AMP algorithm defined in Sec.~\ref{subsec:app_joint_ML_AMP}, and by setting $\sigma_1^t = \sigma^{\textrm{mat}}_1 = 0$. This modification decouples the updates of $\hat{\vect{u}}^t$ and $\hat{\vect{v}}^t$ from those of $\hat{\vect{x}}^t$ and $\hat{\vect{y}}^t$, effectively yielding a sequential version of ML-AMP. The resulting algorithm optimizes separately the matrix loss $\mathcal{L}_m$ and the tensor loss $\norm{\frac{1}{\sqrt{N_4}}\rdmtens{Y}_t \cdot \vect{u}^{\textrm{mat}}-\frac{1}{\sqrt{N_4}} \vect{x} \otimes \vect{y}}^2_F$. The associated SE equations can be re-derived and give:
\begin{align}\label{eq:app_seq_ml_se}
    \begin{cases}
        q_1^{t+1} = \frac{\frac{\alpha_2}{\Delta_m}q_2^t}{\sqrt{\frac{\alpha_2}{\Delta_m} + \para{\frac{\alpha_2}{\Delta_m}q_2^t}^2}} \, ,
        \\[10pt]
        q_2^{t+1} = \frac{\frac{1}{\Delta_m}q_1^t}{\sqrt{\frac{1}{\Delta_m} + \para{\frac{1}{\Delta_m}q_2^t}^2}} \, ,
    \end{cases} \hspace{1.6cm}
    \begin{cases}
        q_3^{t+1} = \frac{\frac{\alpha_4}{\Delta_t}q_1^{\textrm{mat}} q_4^t}{\sqrt{\frac{\alpha_4}{\Delta_t} + \para{\frac{\alpha_4}{ \Delta_t}q_1^{\textrm{mat}} q_4^t}^2}} \, ,
        \\[10pt]
        q_4^{t+1} = \frac{\frac{\alpha_3}{\Delta_t}q_1^{\textrm{mat}} q_3^t}{\sqrt{\frac{\alpha_3}{\Delta_t} + \para{\frac{\alpha_3}{ \Delta_t}q_1^{\textrm{mat}} q_3^t}^2}} \, ,
    \end{cases}
\end{align}
with $q_1^{\textrm{mat}}$ the overlap of the estimator $\vect{u}^{\textrm{mat}}$. Furthermore, we recall that for spiked matrix models, ML-AMP is theoretically equivalent to PCA, although it is less practical.

\section{Fixed point computation}\label{sec:app_sys}

In order to obtain the fixed points expressions, one has to resolve the SE equations. The fixed points of SE describe the performances of the algorithm. Different regimes appear; while this appendix aims at describing the algorithm performances in a given regime, the App.~\ref{sec:app_stab} yields the thresholds delimiting them. One can note several types of fixed points:
\begin{itemize}
    \item The null fixed point given by $\vect{q}=\vect{0}$.
    \item The matricial fixed point given by $\vect{q}=(q_1,q_2,0,0)$ with $q_1,q_2\neq 0$. 
    \item The matrix and tensor fixed point given by $\vect{q}=(q_1,q_2,q_3,q_4)$ with $q_1,q_2,q_3,q_4\neq0$.
\end{itemize}
Each of those fixed points is related to the regions of the space $\mathcal{R}$, through Ths.~\ref{prop:phase_se_bo}~and~\ref{th:tot_fp_ml}. If the proofs of those theorems characterize these fixed points -- for instance by stating their unicity, stability and else -- this section aims at computing the expressions of the fixed points for a given form of solution. Therefore, we will first introduce a small result, useful for the computation:

\begin{mylemma}
Let $a, b\geq0$ and $\Delta > 0$. Consider the system of equations on $(x_1, x_2) \in [0,1]^2$ given by:
\begin{equation}\label{eq:init_sys}
    \begin{cases}
        x_1 + \dfrac{a}{\Delta} x_1 x_2 = \dfrac{a}{\Delta} x_2 \,, \\[5pt]
        x_2 + \dfrac{b}{\Delta} x_1 x_2 = \dfrac{b}{\Delta} x_1 \,.
    \end{cases}
\end{equation}
Then the unique solution is
\begin{equation}\label{eq:sol_sys}
    \begin{cases}
        x_1 = \dfrac{1}{b} \cdot \dfrac{ab - \Delta^2}{a + \Delta} \,, \\[5pt]
        x_2 = \dfrac{1}{a} \cdot \dfrac{ab - \Delta^2}{b + \Delta} \,.
    \end{cases}
\end{equation}
\end{mylemma}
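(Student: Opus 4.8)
The plan is to eliminate one unknown, peel off the trivial root $x_1=x_2=0$, and solve what remains. First I would record the easy structural facts: $(x_1,x_2)=(0,0)$ always solves \eqref{eq:init_sys}; and no solution can have $x_1=1$ (putting $x_1=1$ in the first equation gives $1=0$) nor $x_2=1$, so every solution lies in $[0,1)^2$. Assuming $a,b>0$ (if $a=0$ or $b=0$ one of the equations forces the corresponding variable, then the other, to vanish, leaving only the trivial solution), I rewrite the system as $\Delta x_1=a x_2(1-x_1)$ and $\Delta x_2=b x_1(1-x_2)$, i.e.
\begin{equation}
\frac{x_1}{1-x_1}=\frac{a}{\Delta}\,x_2,\qquad \frac{x_2}{1-x_2}=\frac{b}{\Delta}\,x_1 .
\end{equation}

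Next I would linearise through the change of variables $s:=x_1/(1-x_1)$, $r:=x_2/(1-x_2)$, which is a bijection from $[0,1)$ onto $[0,\infty)$ with inverse $x_1=s/(1+s)$, $x_2=r/(1+r)$, so that no solution is gained or lost. The two relations become $s(1+r)=\tfrac{a}{\Delta}r$ and $r(1+s)=\tfrac{b}{\Delta}s$; subtracting them cancels the common $sr$ term and gives the linear identity $(\Delta+b)s=(\Delta+a)r$. Substituting $s=\tfrac{\Delta+a}{\Delta+b}\,r$ into $r(1+s)=\tfrac{b}{\Delta}s$ and factoring out $r$ leaves either $r=0$ (the trivial solution) or
\begin{equation}
r=\frac{ab-\Delta^2}{\Delta(\Delta+a)} .
\end{equation}
Propagating this back through $x_2=r/(1+r)$, then $s=\tfrac{\Delta+a}{\Delta+b}\,r$ and $x_1=s/(1+s)$, reproduces exactly the expressions in \eqref{eq:sol_sys}; this simultaneously establishes existence and uniqueness of the non-trivial solution and pins down its form.

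Finally I would verify by direct substitution that the pair \eqref{eq:sol_sys} indeed satisfies \eqref{eq:init_sys} — an elementary check — and observe that it lies in $[0,1]^2$ exactly when $ab\geq\Delta^2$, which is precisely the regime in which the lemma is invoked (with $a=\alpha_2$, $b=1$, $\Delta=\Delta_m$, so that $ab-\Delta^2=\alpha_2-\Delta_m^2>0$ throughout $\mathcal{R}_m\cup\mathcal{R}_t$); when $ab<\Delta^2$ only $(0,0)$ remains in $[0,1]^2$. There is no genuine obstacle here: the computation is short. The only points demanding care are the bookkeeping of the spurious root $r=0$, the degenerate cases $a=0$ or $b=0$, and confirming that the map $x\mapsto x/(1-x)$ used to linearise is a bona fide bijection on $[0,1)$, so that the reduction neither introduces nor drops solutions.
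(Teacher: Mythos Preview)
Your argument is correct and in fact more careful than the paper's: you explicitly isolate the trivial root $(0,0)$, treat the degenerate cases $a=0$ or $b=0$, and check when the nontrivial solution actually lies in $[0,1]^2$, none of which the paper addresses. The route, however, is genuinely different. The paper's proof performs the \emph{linear} change of variables $x_\pm := b x_1 \pm a x_2$, under which the difference of the two (rescaled) equations becomes purely linear in $x_+,x_-$ while the sum retains a single quadratic term $x_+^2-x_-^2$; solving the linear relation for $x_-$ in terms of $x_+$ and substituting back then gives the result. Your approach instead uses the \emph{nonlinear} bijection $x\mapsto x/(1-x)$ on $[0,1)$, which turns both equations into bilinear ones in $(s,r)$ so that subtraction immediately yields a linear identity $(\Delta+b)s=(\Delta+a)r$. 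Your substitution is natural here because the SE map has the structure $q\mapsto g(q)/(1+g(q))$, so $x/(1-x)$ is precisely what undoes the denominator; it also makes the uniqueness claim transparent via the bijection, whereas the paper simply asserts it. The paper's linear substitution is marginally lighter algebraically but leaves the trivial root and the domain constraint implicit.
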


\begin{proof}
    To solve this system one can consider:
    \begin{align}
        x_+ &= b x_1 + ax_2 \, , \nonumber \\
        x_- &= b x_1 - ax_2 \, ,
    \end{align}
    such that, the Eqs.~\ref{eq:init_sys} become:
    \begin{equation}
        \begin{cases}
        x_+ + \frac{1}{2 \Delta} \para{x_+^2 - x_-^2} = \frac{1}{2 \Delta} \para{a+b}x_+ + \frac{1}{2 \Delta} \para{a-b}x_- \; , \\
        x_-  = \frac{1}{2 \Delta} \para{-a+b}x_+ - \frac{1}{2 \Delta} \para{a+b}x_- \; . 
        \end{cases}
    \end{equation}
    Then, one can directly obtain the aimed result.
\end{proof}

This result gives the solutions of SE in different settings for the matricial fixed points. The matrix and tensor cannot be easily computed and also are not unique. A more detailed analysis -- also using this result -- of these peculiar fixed points is given in App.~\ref{sec:app_hard_phase}. However, this result may be useful for the analysis of such fixed points. Below, we provide an exhaustive list of how this result can be used to obtain the fixed point values.

\begin{itemize}
    \item  \underline{Bayes matrix fixed point:}
    
        We are looking for fixed points of the Bayes-SE equations with the condition $q_3=q_4=0$ and $q_1,q_2\neq0$. Thus, one can turn the update on $q_1$ and $q_2$ into:
        \begin{align}
            \begin{cases}
                q_1 + \frac{\alpha_2}{\Delta_m}q_1 q_2 = \frac{\alpha_2}{\Delta_m} q_2 \\[5pt]
                q_2 + \frac{1}{\Delta_m}q_1 q_2= \frac{1}{\Delta_m} q_1 
            \end{cases}
        \end{align}
        Then, one can use the aforementioned result and obtains, the expression of the Bayes matrix fixed point written in Eq.~\ref{eq:qu_mat_bo}.

    \item \underline{ML matrix fixed point:}

    Similarly as before, we are looking for a fixed point of ML-SE equations with $q_3=q_4=0$ and $q_1,q_2\neq0$. One can turn the ML-SE equations in the following form:
    \begin{align}
        \begin{cases}
            q^2_1 + \frac{\tilde{\alpha}_2}{\Delta_m}q^2_1 q^2_2 = \frac{\tilde{\alpha}_2}{\Delta_m} q^2_2 \\[5pt]
            q^2_2 + \frac{1}{\Delta_m}q^2_1 q^2_2 = \frac{1}{\Delta_m} q^2_1 
        \end{cases}
    \end{align}
    Then, applying the result leads to Eqs.~\ref{eq:qu_mat_ml}.

    \item  \underline{Bayes matrix and tensor fixed points:}

    We are looking for fixed points of the Bayes-SE equations with the condition $q_1,q_2,q_3,q_4\neq0$. Thus, one can turn the update on $q_3$ and $q_4$ into:
    \begin{align}
        \begin{cases}
            q_3 + \frac{\alpha_4}{\Delta_t}q_1 q_3 q_4 = \frac{\alpha_4}{\Delta_t} q_1 q_4 \\[5pt]
            q_4 + \frac{\alpha_3}{\Delta_t}q_1 q_3 q_4 = \frac{\alpha_3}{\Delta_t} q_1 q_3 
        \end{cases}
    \end{align}
    Then, one can use the aforementioned result and obtains, the expression of the overlaps $q_3$ and $q_4$ in terms of $q_1$ and the parameters of the system. This result will be useful to analyze the Hard Phase of the Bayes-AMP algorithm, developed in Sec.~\ref{sec:app_hard_phase}.

    \item \underline{ML matrix and tensor fixed points:}

    Similarly as before, we are looking for a fixed point of ML-SE equations with $q_1,q_2,q_3,q_4\neq0$. One can turn the ML-SE equations in the following form:
    \begin{align}
        \begin{cases}
            q^2_3 + \frac{\alpha_4}{\Delta_t}q^2_1 q^2_3 q^2_4 = \frac{\alpha_4}{\Delta_t} q^2_1 q^2_4 \\[5pt]
            q^2_4 + \frac{\alpha_3}{\Delta_t}q^2_1 q^2_3 q^2_4 = \frac{\alpha_3}{\Delta_t} q^2_1 q^2_3  
        \end{cases}
    \end{align}
    Similarly to the Bayesian case, the result yields the expressions of $q_3$ and $q_4$ as a function of $q_1$ and the parameter of the system. This result will be crucial for further analysis of the ML-AMP algorithm in Sec.~\ref{sec:app_hard_phase}.

    \item \underline{Sequential fixed points:}

    In Sec.~\ref{sec:spec}, we considered an algorithm based on two successive spectral methods. This approach leads to the fixed points described in Eqs.~\ref{eq:overlaps_12_seq}~and~\ref{eq:overlaps_34_seq}, derived with rigorous arguments from random matrix theory. In App.~\ref{sec:app_equiv_sequential}, we showed that this spectral method is in fact equivalent to a sequential ML-AMP algorithm, and we provided the corresponding state evolution equations. The fixed points of the SE equations reduce to the following system:
    \begin{align}
        \begin{cases}
            q^2_1 + \frac{\alpha_2}{\Delta_m}q^2_1 q^2_2 = \frac{\alpha_2}{\Delta_m} q^2_2 \\[5pt]
            q^2_2 + \frac{1}{\Delta_m}q^2_1 q^2_2 = \frac{1}{\Delta_m} q^2_1 
        \end{cases}\hspace{1cm}
        \begin{cases}
            q^2_3 + \frac{\alpha_4}{\Delta_t}q^2_1 q^2_3 q^2_4 = \frac{\alpha_4}{\Delta_t} q^2_1 q^2_4 \\[5pt]
            q^2_4 + \frac{\alpha_3}{\Delta_t}q^2_1 q^2_3 q^2_4 = \frac{\alpha_3}{\Delta_t} q^2_1 q^2_3  
        \end{cases}
    \end{align}
    Thus, the aforementioned result directly yields the expression of $\vect{q}^s$. 
\end{itemize}

\section{Stability Analysis}\label{sec:app_stab}

In this Appendix, we derive the calculation of the thresholds delimiting the Bayesian recovery regions ($\mathcal{R}_0$, $\mathcal{R}_m$, $\mathcal{R}_t$) defined in Eq.~\ref{eq:region_0}~to~\ref{eq:region_t} and the effective ones delimiting the Joint ML recovery regions ($\tilde{\mathcal{R}}_0$, $\tilde{\mathcal{R}}_m$, $\tilde{\mathcal{R}}_t$). We note a fixed point of the SE equations: $\vect{q}^0=(q^0_1, q^0_2, q^0_3, q^0_4)$. According to Eqs.\ref{eq:state_evolution_BO}~and~\ref{eq:state_evolution_linearized}, the fixed point of the SE equations can be written in a compact form:
\begin{equation}
    \vect{q}^0 = \vect{f}\para{\vect{g} \para{\vect{q}^0}} \, .
\end{equation}
The function $\vect{g}$ is defined in Eq.~\ref{eq:g_entry_wise}. Namely,
\begin{equation}
    \begin{cases}
    \left[\vect{g}\para{\vect{q}^{t}}\right]_1 = \frac{\alpha_2}{\Delta_m}q^t_2 + \frac{\alpha_3 \alpha_4}{\rho \Delta_t}q^t_3 q^t_4 \; , \\[5pt] 
    \left[\vect{g}\para{\vect{q}^{t}}\right]_2 = \frac{1}{\Delta_m} q^t_1 \; , \\[5pt]
    \left[\vect{g}\para{\vect{q}^{t}}\right]_3 = \frac{\alpha_4}{\rho \Delta_t}q^t_1 q^{t}_4 \; ,  \\[5pt] 
    \left[\vect{g}\para{\vect{q}^{t}}\right]_4 = \frac{\alpha_3}{\rho \Delta_t} q_1^t q_3^t \; .
    \end{cases}
\end{equation}
Additionally, exhaustive versions of SE are given in App.~\ref{sec:app_algo}.
We are aiming at analyzing the stability of this fixed point, to do so, let us consider a perturbation $\delta \vect{q}^t$ around the fixed point $\vect{q}^0$. We define the perturbed fixed point $\vect{q}^t = \vect{q}^0 + \delta \vect{q}^t$. Thus, the SE equations can be written as:
\begin{equation}
    \delta \vect{q}^{t+1} =  \nabla\vect{f}\para{\vect{g} (\vect{q}^0)} \,\odot\, \delta \para{\vect{g}\para{\vect{q}^{t}}}
\end{equation}
Thus each of the two terms of the right-hand side can be computed apart:
\begin{align}
    \nabla \vect{f}_\textrm{Bayes}(\vect{g}(\vect{q}^0)) &= \para{\para{1+\left[\vect{g}_{\rho=1}(\vect{q}^0)\right]_k}^{-2}}_{k \in \llbracket 4 \rrbracket} \; ,  \nonumber \\[5pt]
    \nabla \vect{f}_\textrm{ML}(\vect{g}(\vect{q}^0)) &= \para{[\vect{g}_{\rho^2}(\vect{1})]_k\para{\left[\vect{g}_{\rho^2}(\vect{1})\right]_k+ \left[\vect{g}_\rho(\vect{q}^0)\right]_k^2}^{-3/2}}_{k \in \llbracket 4 \rrbracket} \; .
\end{align}
and
\begin{equation}
    \begin{cases}
    \left[\delta \para{\vect{g}\para{\vect{q}^{t}}}\right]_1 = \frac{\alpha_2}{\Delta_m}\delta q^t_2 + \frac{\alpha_3 \alpha_4}{\rho \Delta_t}\para{q^0_3 \delta q^t_4 + q^0_4 \delta q^t_3} \; , \\[5pt] 
    \left[\delta \para{\vect{g}\para{\vect{q}^{t}}}\right]_2 = \frac{1}{\Delta_m}\delta q^t_1 \; , \\[5pt]
    \left[\delta \para{\vect{g}\para{\vect{q}^{t}}}\right]_3 = \frac{\alpha_4}{\rho \Delta_t}\para{q^0_1 \delta q^t_4 + q^0_4 \delta q^t_1} \; ,  \\[5pt]
    \left[\delta \para{\vect{g}\para{\vect{q}^{t}}}\right]_4 = \frac{\alpha_3}{\rho \Delta_t}\para{q^0_1 \delta q^{t}_3 + q^0_3 \delta q^t_1} \; .
    \end{cases}
\end{equation}

Furthermore, it is possible to simplify even more these equations for specific fixed points. 

\subsection{Stability of the null fixed point}\label{subsec:app_null_fp}

If we focus on the null fixed point known as the fixed point $\vect{q}^0 = 0$, it comes:
\begin{equation} \label{eq:perturb_evol_null}
    \begin{cases}
    \delta q^{t+1}_1 = \frac{\alpha_2}{\Delta_m} \left[\nabla\vect{f}(\vect{0})\right]_1 \delta q^t_2 \; , \\[5pt]
    \delta q^{t+1}_2 = \frac{1}{\Delta_m} \left[\nabla\vect{f}(\vect{0})\right]_2 \delta q^t_1  \; , \\[5pt]
    \delta q^{t+1}_3 = 0 \; ,  \\[5pt]
    \delta q^{t+1}_4 = 0 \; .
    \end{cases}
\end{equation}
Thus, the stability of the null fixed point is given by the following condition:
\begin{equation}\label{eq:cond_stab_null}
    \frac{\alpha_2}{\Delta_m^2} \left[\nabla\vect{f}(\vect{0})\right]_1 \left[\nabla\vect{f}(\vect{0})\right]_2  < 1 \, .
\end{equation}
This threshold defines the regions $\mathcal{R}_0$ and $\tilde{\mathcal{R}}_0$. For instance: 
\begin{itemize}
    \item \underline{Bayes null fixed point:}
        \begin{align}
            \Delta_m>\sqrt{\alpha_2}
        \end{align}
    \item \underline{ML null fixed point:}
        \begin{align}
            \Delta_m>\sqrt{\alpha_2} \para{\frac{\frac{\alpha_2}{\Delta_m}}{\frac{\alpha_2}{\Delta_m} + \frac{\alpha_3 \alpha_4}{\rho^2\Delta_t}}}^{1/2} = \sqrt{\tilde{\alpha}_2}
        \end{align}
    \item \underline{Sequential fixed points:}

        Recalling the mapping between the sequential ML-AMP algorithm and the spectral method detailed in App.~\ref{sec:app_equiv_sequential}, one can apply the fixed point analysis successively: first on the matrix estimation, then on the tensor estimation conditioned on the matrix output. For the null fixed point to be stable in both stages, the following condition must hold:
        \begin{align}
            \Delta_m>\sqrt{\alpha_2}
        \end{align}
        On the other hand, the existence and stability of a matrix fixed point—which corresponds to the null fixed point in the tensor estimation—require, respectively:
        \begin{align}
            \Delta_m < \sqrt{\alpha_2} \quad \textrm{and} \quad \Delta_t > \alpha_3 \alpha_4 \frac{\alpha_2-\Delta_m^2}{\alpha_2+\Delta_m} = \delta_c^{\mathrm{Bayes}}(\alpha_2, \alpha_3, \alpha_4, \Delta_m) \, .
        \end{align}
\end{itemize}

\subsection{Stability of the matrix fixed point}\label{subsec:app_matrix_fp}

The matrix fixed point, corresponding to the situation where we can recover only the matrix, has overlaps $q_3=q_4=0$ and $q_1, q_2 \neq 0$. Then, in this configuration, the stability system can be written as:
\begin{equation} \label{eq:perturb_evol_mat}
    \begin{cases}
    \delta q^{t+1}_1 = \frac{\alpha_2}{\Delta_m} \left[\nabla\vect{f}(\vect{g}(\vect{q}^0))\right]_1 \delta q^t_2 \; , \\[5pt] 
    \delta q^{t+1}_2 = \frac{1}{\Delta_m} \left[\nabla\vect{f}(\vect{g}(\vect{q}^0))\right]_2 \delta q^t_1 \; , \\[5pt]
    \delta q^{t+1}_3 = \frac{\alpha_4}{\rho \Delta_t} q_1^0 \left[\nabla\vect{f}(\vect{0})\right]_3 \delta q^t_4  \; ,  \\[5pt]
    \delta q^{t+1}_4 = \frac{\alpha_3}{\rho \Delta_t} q_1^0 \left[\nabla\vect{f}(\vect{0})\right]_4 \delta q^t_3 \; .
    \end{cases}
\end{equation}
Thus, the stability condition can be written as:
\begin{equation}\label{eq:cond_stab_mat}
    \frac{\alpha_2}{\Delta_m^2} \, \left[\nabla\vect{f}(\vect{g}(\vect{q}^0))\right]_1\, \left[\nabla\vect{f}(\vect{g}(\vect{q}^0))\right]_2 < 1 \quad \textrm{and} \quad \frac{\alpha_3 \alpha_4}{\Delta_t^2}\, \big(q_1^0\big)^2\,  \left[\nabla\vect{f}(\vect{0})\right]_3 \left[\nabla\vect{f}(\vect{0})\right]_4 < 1
\end{equation}
This threshold defines the regions $\mathcal{R}_m$ and $\tilde{\mathcal{R}}_m$. Specifically,:
\begin{itemize}
    \item \underline{Bayes matrix fixed point:}
        \begin{align}
            \Delta_m < \sqrt{\alpha_2} \quad \textrm{and} \quad \Delta_t > \alpha_3 \alpha_4 \frac{\alpha_2-\Delta_m^2}{\alpha_2+\Delta_m} = \delta_c^{\mathrm{Bayes}}(\alpha_2, \alpha_3, \alpha_4, \Delta_m) \, , 
        \end{align}
    \item \underline{ML matrix fixed point:}
        \begin{align}
            \Delta_m < \sqrt{\tilde{\alpha}_2} \quad \textrm{and} \quad \Delta_t > \alpha_3 \alpha_4 \frac{\tilde{\alpha}_2-\Delta_m^2}{\tilde{\alpha}_2+\Delta_m} = \delta_c^{\mathrm{Bayes}}(\tilde{\alpha}_2, \alpha_3, \alpha_4, \Delta_m) = \tilde{\delta}_c \, .
        \end{align}
\end{itemize}

\section{Free energy of the model}\label{sec:free_e}

In a Bayesian setting, the posterior distribution over the signals $\vect{u}$, $\vect{v}$, $\vect{x}$ and $\vect{y}$, given the observations $\rdmmat{Y}_m, \rdmtens{Y}_t$, is defined in Eq.~\ref{eq:post}, and involves the normalization constant known as the \emph{partition function}~$\mathcal{Z}$. The associated free energy is defined by:
\begin{align}
    \Phi = \lim_{N_1\to\infty} \mathbb{E}_{\rdmmat{Y}_m, \rdmtens{Y}_t}\left[- \frac{1}{N_1} \log \mathcal{Z}\para{\rdmmat{Y}_m, \rdmtens{Y}_t}\right] \, .
\end{align}
Computing this quantity directly is intractable. Several non-rigorous approaches can be used, such as the \emph{replica method} or derivations based on the \emph{Bethe Free Energy} through the AMP framework. However, a rigorous expression for the free energy can be adapted from closely related settings where it has been proven: namely, the non-symmetric order-3 tensor spiked model~\cite{barbier2017layered}, and the symmetric matrix-tensor case~\cite{Sarao_Mannelli_marvels}. In our case, this leads to:
\begin{align}
    &\Phi = \min_{\vect{q}} \tilde{\Phi}(\vect{q}) \, ,\qquad \mathrm{where} \\
    &\tilde{\Phi}(\vect{q}) = \frac{\alpha_2}{2 \Delta_m} q_1 q_2 + \frac{\alpha_3 \alpha_4}{2 \Delta_t} q_1 q_3 q_4 + \frac{1}{2}\sum_{k=1}^4 \alpha_k \left[ q_k + \log \left(1 - q_k \right) \right] \, ,
\end{align}
with $\alpha_1 = 1$ by convention. 

Finally, the Bayes-optimal State Evolution (Bayes-SE) fixed points are characterized as the critical points of the potential $\tilde{\Phi}$. In particular, they are given by:
\begin{equation}
    \nabla_{\vect{q}} \Phi(\vect{q}) = \vect{0} \, ,
\end{equation}
as recalled in Eqs.~\ref{eq:app_bo_se}.

\section{Spinodal and Hard Phase}\label{sec:app_hard_phase}

In Secs.~\ref{sec:bo_amp}~and~\ref{sec:ml_amp}, we analyzed the Bayes-AMP and the ML-AMP algorithms. However, the phase diagrams discussed in these sections omitted a peculiar aspect: the hard phase and, more generally, the dependency on the initialization. In this appendix, we detail the methods used to characterize this dependency in both Bayes and ML settings. 

\subsection{Hard phase analysis in the Bayesian setting}

\begin{figure}[t]
    \centering
    \begin{subfigure}{\textwidth}
        \centering
        \includegraphics[width=0.48\textwidth]{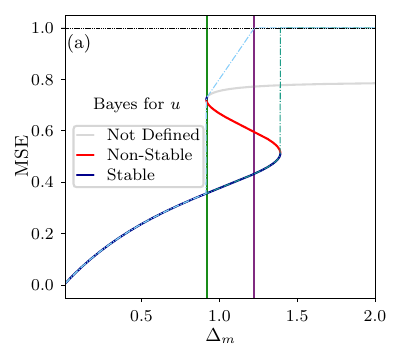}
        \hspace{0.01cm}
        \includegraphics[width=0.48\textwidth]{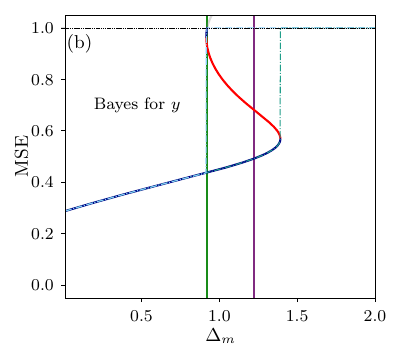}
    \end{subfigure}
    \caption{Mean Square Error (MSE) as a function of $\Delta_m$ for the Bayes-AMP algorithm at fixed $\Delta_t = 0.24$, with $\alpha_2 = 1.5$, $\alpha_3 = 0.8$, and $\alpha_4 = 1$. Panel (a) shows the MSE for the $u$ estimator, while panel (b) shows the MSE for the $y$ estimator. Stable, unstable, and non-defined fixed points are respectively shown in blue, red, and gray. The green and purple vertical lines correspond to the phase transitions $\Delta_m = \sqrt{\alpha_2}$ and $\Delta_t = \delta_c^{\mathrm{Bayes}}$. The blue and green dash-dot lines correspond to the iterative Bayes-SE initialized respectively with an uninformative and an informative overlap vector: $\vect{q}^{t=0} = 0.05\,\vect{1}$ and $\vect{q}^{t=0} = 0.9\,\vect{1}$.}
    \label{fig:mse_bayes}
\end{figure}

\begin{figure}[t]
    \centering
    \begin{subfigure}{\textwidth}
        \centering
        \includegraphics[width=0.48\textwidth]{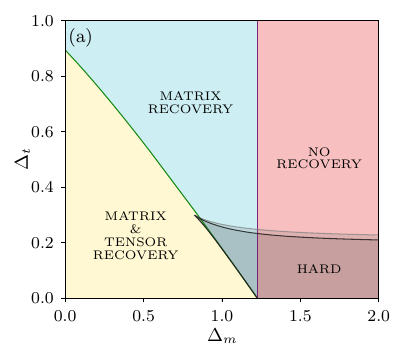}
        \hspace{0.01cm}
        \includegraphics[width=0.48\textwidth]{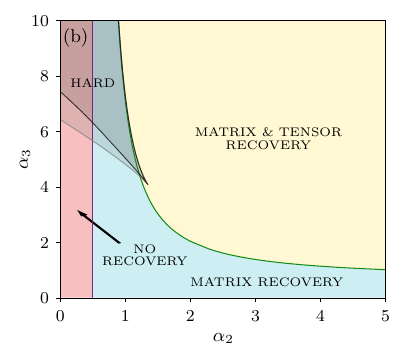}
    \end{subfigure}
    \caption{Phase diagrams for Bayes-AMP algorithm, (a) in $\para{\Delta_m, \Delta_t}$ space  with size ratios $\alpha_2 = 1.5$, $\alpha_3 = 0.8$, and $\alpha_4 = 1$ and in $\para{\alpha_2, \alpha_3}$ space (b) in $\para{\alpha_2, \alpha_3}$ space with noise $\Delta_m = 0.7$, $\Delta_t=0.8$ and a size ratio $\alpha_4=1$. The red, blue and yellow regions respectively delimit $\mathcal{R}_0$, $\mathcal{R}_m$ and $\mathcal{R}_t$. The purple and green solid lines indicate the phase transitions between the regions $\mathcal{R}_0$ and $\mathcal{R}_m$, and between $\mathcal{R}_m$ and $\mathcal{R}_t$, respectively. These transitions are implicitly defined by $\Delta_m = \sqrt{\alpha_2}$ and $\Delta_t = \delta_c^{\mathrm{Bayes}}$. The black and gray lines denote the boundaries of the hard phase and the spinodal transitions, respectively, with their corresponding regions shaded in black and gray.}
    \label{fig:hard_bayes}
\end{figure}

In this appendix, we provide a more refined analysis of the matrix-tensor fixed points, in which all overlaps are nonzero. We show in Sec.~\ref{sec:app_sys} that for Bayes-AMP, the tensor overlaps $q_3$ and $q_4$ can be expressed as functions of the matrix overlap $q_1$. Then, one can obtain the following system of equations:
\begin{align}
    \begin{cases}
        q_1 + \para{\frac{\alpha_2}{\Delta_m} q_2 + \frac{\alpha_3 \alpha_4} {\Delta_t} q_3 q_4 }\para{q_1 -1} = 0 \; , \\[5pt]
        q_2 = \frac{q_1}{\Delta_m+q_1}\; , \\[5pt]
        q_3 = \frac{1}{\alpha_3 q_1 } \frac{\alpha_3 \alpha_4 q_1^2 - \Delta_t^2}{\alpha_4 q_1 +\Delta_t} \; , \\[5pt]
        q_4 = \frac{1}{\alpha_4 q_1 } \frac{\alpha_3 \alpha_4 q_1^2 - \Delta_t^2}{\alpha_3 q_1 + \Delta_t} \; .
    \end{cases}
\end{align}
This system reduces the SE to a degree-6 polynomial in $q_1$ in the Bayesian setting. These polynomial forms are numerically convenient, as they reveal all fixed points—including unstable ones—with reasonable computational cost. Figure~\ref{fig:mse_bayes} illustrates the evolution of these fixed points and the associated transitions across different regimes.

The roots of the resulting polynomial describe the informative, uninformative, and unstable fixed points of the system. For completeness, we also include roots that fall outside the domain $[0,1]^4$, which exhibit the familiar structure reported in~\cite{Lesieur_2017}. Informative (resp. uninformative) fixed points are defined as those reached by informative (resp. uninformative) initialization.

Spinodal transitions are identified by locating the parameter values at which the polynomial roots acquire a vertical tangent. These critical points delineate the regions of spinodal existence, shown in Fig.~\ref{fig:hard_bayes}, and indicate where informed initialization significantly improves recovery performance.

Fig.~\ref{fig:mse_bayes} can be interpreted as a slice of the phase diagram in Fig.~\ref{fig:hard_bayes} at fixed $\Delta_t = 0.24$. As $\Delta_m$ increases, we observe the following regimes. Note that we only describe here the phases that appear in this particular plot; additional regimes, present in the full phase diagram, are not visible in this slice.
\begin{enumerate}[label=(\roman*)]
    \item In region $\mathcal{R}_t$, only one stable matrix-tensor fixed point exists, which is uninformative. Bayes-AMP converges to it regardless of initialization.
    \item In the overlap between $\mathcal{R}_t$ and the spinodal region, three fixed points appear: an uninformative stable one, an informative stable one, and an unstable one. This confirms that the tri-critical point does not coincide with the boundary of $\mathcal{R}_t$.
    \item In region $\mathcal{R}_m$, the stable uninformative fixed point becomes purely matricial (denoted $\vect{q}_m$), while a stable informative matrix-tensor fixed point and an unstable one coexist.
    \item In region $\mathcal{R}_0$, the only stable point reached from an uninformative initialization is the trivial (null) one. However, an informative initialization can still lead to recovery through a matrix-tensor fixed point.
    \item Finally, beyond a certain threshold, only the null fixed point remains.
\end{enumerate}

The hardness of Bayes-AMP can be quantified using the free energy from App.~\ref{sec:free_e}. In regions where both informative and uninformative fixed points exist, a hard phase emerges when the global minimizer of the free energy corresponds to the informative fixed point. Such a solution can only be reached with an informed initialization. Comparing the respective free energies computed via the expressions provided above allows one to identify this hard phase precisely.

\subsection{Further analysis in the ML setting}

\begin{figure}[t]
    \centering
    \begin{subfigure}{\textwidth}
        \centering
        \includegraphics[width=0.48\textwidth]{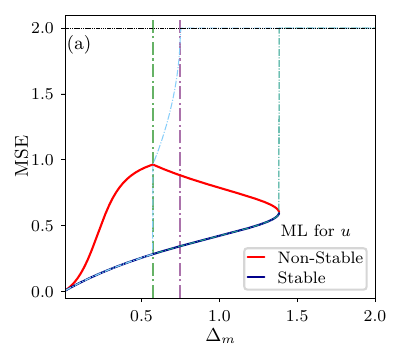}
        \hspace{0.01cm}
        \includegraphics[width=0.48\textwidth]{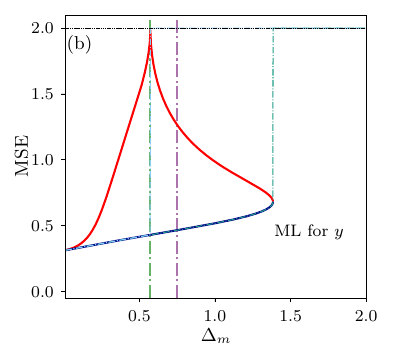}
    \end{subfigure}
    \caption{Mean Square Error (MSE) as a function of $\Delta_m$ for the ML-AMP algorithm at fixed $\Delta_t = 0.24$, with $\alpha_2 = 1.5$, $\alpha_3 = 0.8$, $\alpha_4 = 1$, and $\rho=1$. Panel (a) shows the MSE for the $u$ estimator, while panel (b) shows the MSE for the $y$ estimator. Stable and unstable fixed points are respectively shown in blue and red. The green and purple dash-dot lines correspond to the phase transitions $\Delta_m = \sqrt{\tilde{\alpha}_2}$ and $\Delta_t = \tilde{\delta}_c$. The blue and green dash-dot lines correspond to the iterative ML-SE initialized respectively with an uninformative and an informative overlap vector: $\vect{q}^{t=0} = 0.2\,\vect{1}$ and $\vect{q}^{t=0} = 0.9\,\vect{1}$.  }
    \label{fig:mse_ml}
\end{figure}

The derivation of the matrix-tensor fixed points in the ML estimation setting follows the same structure as in the Bayesian case. Using the result from App.~\ref{sec:app_sys}, the ML-SE equations can be reformulated into a polynomial equation of degree 12 in $q_1^2$:
\begin{align}
    \begin{cases}
        \para{\frac{\alpha_2}{\Delta_m} + \frac{\alpha_3\alpha_4}{\rho^2 \Delta_t}} q^2_1 + \para{\frac{\alpha_2}{\Delta_m}  q_2 + \frac{\alpha_3 \alpha_4}{\rho \Delta_t} q_3 q_4}^2 \para{q^2_1-1}  = 0   \, , \\[5pt]
        q^2_2 = \frac{q^2_1}{\Delta_m+q^2_1}\; , \\[5pt]
        q^2_3 = \frac{1}{\alpha_3 q^2_1 } \frac{\alpha_3 \alpha_4 q_1^4 - \Delta_t^2}{\alpha_4 q^2_1 +\Delta_t} \; , \\[5pt]
        q^2_4 = \frac{1}{\alpha_4 q^2_1 } \frac{\alpha_3 \alpha_4 q_1^4 - \Delta_t^2}{\alpha_3 q^2_1 + \Delta_t} \; .
    \end{cases}
\end{align}
This formulation allows for an efficient numerical computation of both stable and unstable matrix-tensor fixed points. These are illustrated in Fig.~\ref{fig:mse_ml}, where we compare ML-SE trajectories under informative and uninformative initializations. The transitions observed for the uninformative ML-SE match closely the analytically derived thresholds. In contrast, the informative ML-SE achieves better performance; this behavior mirrors the one observed in the Bayesian setting. This highlights the crucial role of initialization, even in the ML-AMP case. In particular, it is possible to recover both matrix and tensor for certain parameter regimes where the uninformative ML-AMP fails entirely, but where a sequential initialization leads to successful recovery—despite worse performance than the informative ML-AMP. This suggests a natural algorithmic strategy: initialize ML-AMP using the sequential spectral method, and optimize the parameter $\rho$ to further improve performance.

\section{Further numerical results}

In this section, we provide numerical results for the two main algorithms Bayes-AMP and ML-AMP in Fig.~\ref{fig:num_bayes}~and~\ref{fig:num_ml} respectively.

\begin{figure}[ht]
    \centering
    \includegraphics[width=0.8\textwidth]{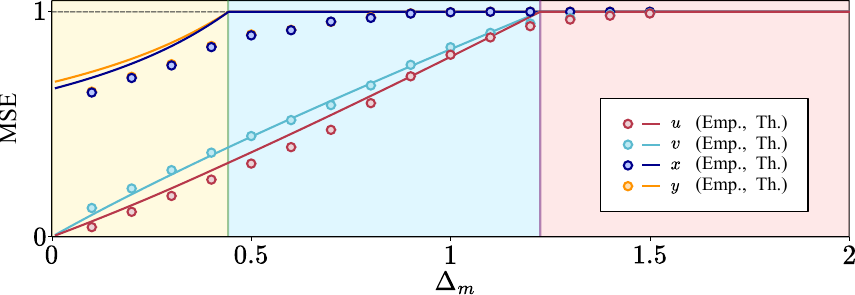}
    \caption{Empirical and theoretical MSE vs. $\Delta_m$ in the Bayesian case, with parameters $\Delta_t=0.6,\; N_1=10^3, \; N_2 = 1.5\,.\,10^3,\; N_3 = 0.8\,.\,10^3,\;N_4 = 10^3$. The empirical values (dots) are computed numerically via Bayes-AMP algorithm initialized with an SVD on the matrix, averaged over 50 samples. The error bars are not visible on the plots due to their small magnitude. Their theoretical predictions (solid lines) are given by the iterative Bayes-SE algorithm. The background colors match the three regions $\mathcal{R}_0$, $\mathcal{R}_m$, and $\mathcal{R}_t$. The purple and green vertical lines indicate the phase transitions between the regions $\mathcal{R}_0$ and $\mathcal{R}_m$, and between $\mathcal{R}_m$ and $\mathcal{R}_t$, respectively. These transitions are implicitly defined by $\Delta_m = \sqrt{\alpha_2}$ and $\Delta_t = \delta_c^{\mathrm{Bayes}}$.
    }
    \label{fig:num_bayes}
\end{figure}

\begin{figure}[ht]
    \centering
    \includegraphics[width=0.8\textwidth]{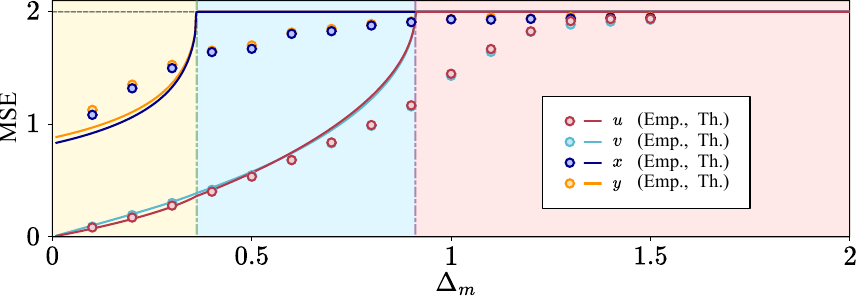}
    \caption{Empirical and theoretical MSE vs. $\Delta_m$ in the ML case, with parameters $\Delta_t=0.6,\; N_1=10^3, \; N_2 = 1.5\,.\,10^3,\; N_3 = 0.8\,.\,10^3,\;N_4 = 10^3$, and $\rho=1$. The empirical values (dots) are computed numerically via ML-AMP algorithm initialized with an SVD on the matrix, averaged over 50 samples. The error bars are not visible on the plots due to their small magnitude. Their theoretical predictions (solid lines) are given by the iterative ML-SE algorithm. The background colors match the three regions $\tilde{\mathcal{R}}_0$, $\tilde{\mathcal{R}}_m$, and $\tilde{\mathcal{R}}_t$. The purple and green dash-dot lines indicate the phase transitions between the regions $\tilde{\mathcal{R}}_0$ and $\tilde{\mathcal{R}}_m$, and between $\tilde{\mathcal{R}}_m$ and $\tilde{\mathcal{R}}_t$, respectively. These transitions are implicitly defined by $\Delta_m = \sqrt{\tilde{\alpha}_2}$ and $\Delta_t = \tilde{\delta}_c$.
    }
    \label{fig:num_ml}
\end{figure}

\end{document}